\newtheorem{theorem}{Theorem}
\newtheorem{lemma}{Lemma}
\title{Task-Specific Preconditioner for Cross-Domain Few-Shot Learning}
\author{
    Suhyun Kang\textsuperscript{\rm 1}, Jungwon Park\textsuperscript{\rm 2}, Wonseok Lee\textsuperscript{\rm 3}, Wonjong Rhee\textsuperscript{\rm 2,3}\thanks{Corresponding author}
}
\begin{document}

\maketitle

\begin{abstract}
  Cross-Domain Few-Shot Learning~(CDFSL) methods typically parameterize models with task-agnostic and task-specific parameters. To adapt task-specific parameters, recent approaches have utilized fixed optimization strategies, despite their potential sub-optimality across varying domains or target tasks. To address this issue, we propose a novel adaptation mechanism called Task-Specific Preconditioned gradient descent~(TSP). Our method first meta-learns Domain-Specific Preconditioners~(DSPs) that capture the characteristics of each meta-training domain, which are then linearly combined using task-coefficients to form the Task-Specific Preconditioner. The preconditioner is applied to gradient descent, making the optimization adaptive to the target task. We constrain our preconditioners to be positive definite, guiding the preconditioned gradient toward the direction of steepest descent. Empirical evaluations on the Meta-Dataset show that TSP achieves state-of-the-art performance across diverse experimental scenarios. 
\end{abstract}
\section{Introduction}
\label{sec:intro}
Few-Shot Learning~(FSL) aims to learn a model that can generalize to novel classes using a few labeled examples. Recent advancements in FSL have been significantly propelled by meta-learning methods \cite{snell2017prototypical, finn2017model, sung2018learning, oreshkin2018tadam, garnelo2018conditional, rajeswaran2019meta}. 
These approaches have achieved outstanding results in single domain FSL benchmarks such as Omniglot~\cite{lake2011one} and \textit{mini}Imagenet~\cite{ravi2016optimization}. However, recent studies~\cite{chen2019closer,tian2020rethinking} have revealed that many existing FSL methods struggle to generalize in cross-domain setting, where the test data originates from domains that are either unknown or previously unseen. To study the challenge of generalization in cross-domain few-shot tasks, \citet{triantafillou2019meta} introduced the \textit{Meta-Dataset}, a more realistic, large-scale, and diverse benchmark. It includes multiple datasets from a variety of domains for both meta-training and meta-testing phases.

Leveraging the Meta-Dataset, various Cross-Domain Few-Shot Learning (CDFSL) methods have been developed~\cite{requeima2019fast,bateni2020improved, bateni2022enhancing,liu2021multi,triantafillou2021learning,li2021universal,li2022cross,dvornik2020selecting,liu2020universal,guo2023task,tian2024mokd}, demonstrating significant advancements in this field. 
These approaches typically parameterize deep neural networks with a large set of task-agnostic parameters alongside a smaller set of task-specific parameters. Task-specific parameters are optimized to the target task through an adaptation mechanism, generally following one of two primary methodologies. 
The first approach utilizes an auxiliary network functioning as a parameter generator, which, upon receiving a few labeled examples from the target task, outputs optimized task-specific parameters~\cite{requeima2019fast,bateni2020improved, bateni2022enhancing,liu2020universal,liu2021multi}. 
The second approach directly fine-tunes the task-specific parameters through gradient descent using a few labeled examples from the target task~\cite{dvornik2020selecting,li2021universal,triantafillou2021learning,li2022cross, tian2024mokd}. 

\begin{figure*}
    \centering
    \begin{minipage}{0.65\textwidth}
        \centering
        \begin{subfigure}[t]{0.54\textwidth}
            \includegraphics[width=\textwidth]{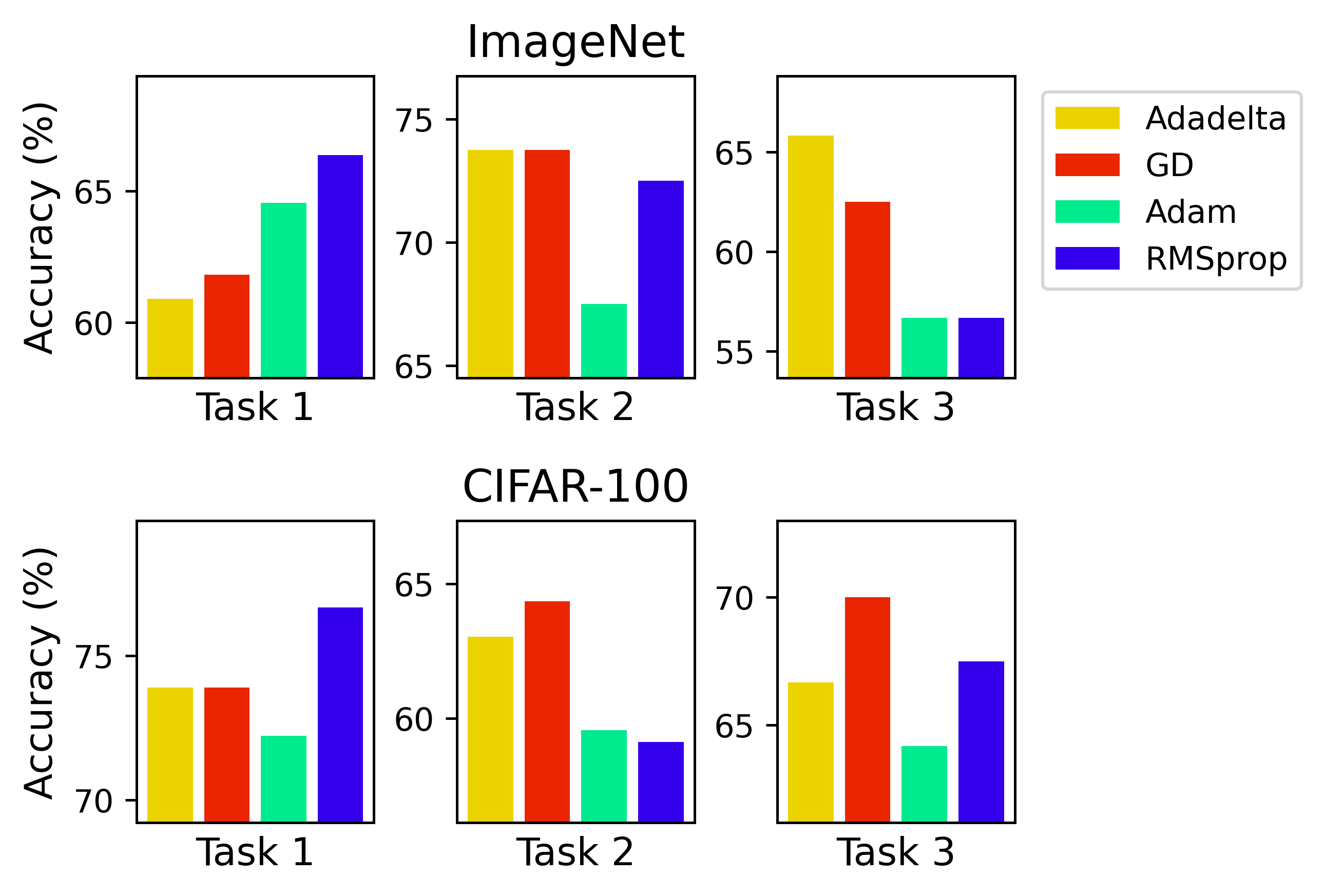}
            \subcaption[]{}
            \label{subfig:motivation1}
        \end{subfigure}
        \begin{subfigure}[t]{0.4\textwidth}
            \includegraphics[width=\textwidth]{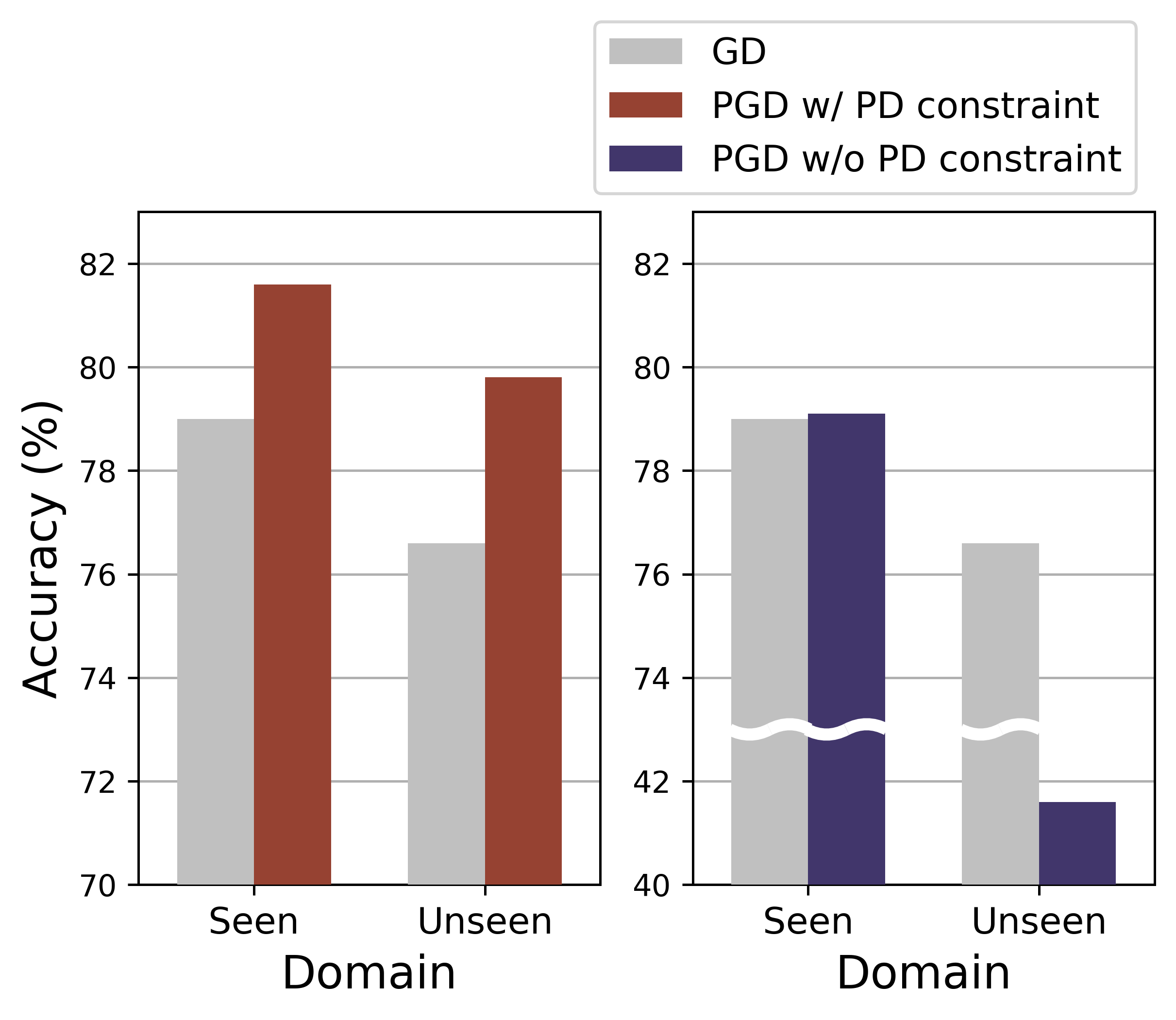}
            \subcaption[]{}
            \label{subfig:motivation2}
        \end{subfigure}
        \caption{All experiments are conducted baed on TSA. (a) The optimal optimization strategy can vary significantly depending on the nature of the target task, leading to notable differences in performance on the Meta-Dataset. (b) The accuracy of seen and unseen for the Meta-Dataset. Compared to the baseline of using gradient descent, adopting a preconditioner without a PD constraint can be unreliable. With a PD constraint, it becomes reliable to adapt the preconditioner to the target task. Further details on these preconditioners are provided in Appendix A.} 
    \end{minipage}
    \hfill
    \begin{minipage}{0.3\textwidth}
        \includegraphics[width=\textwidth]{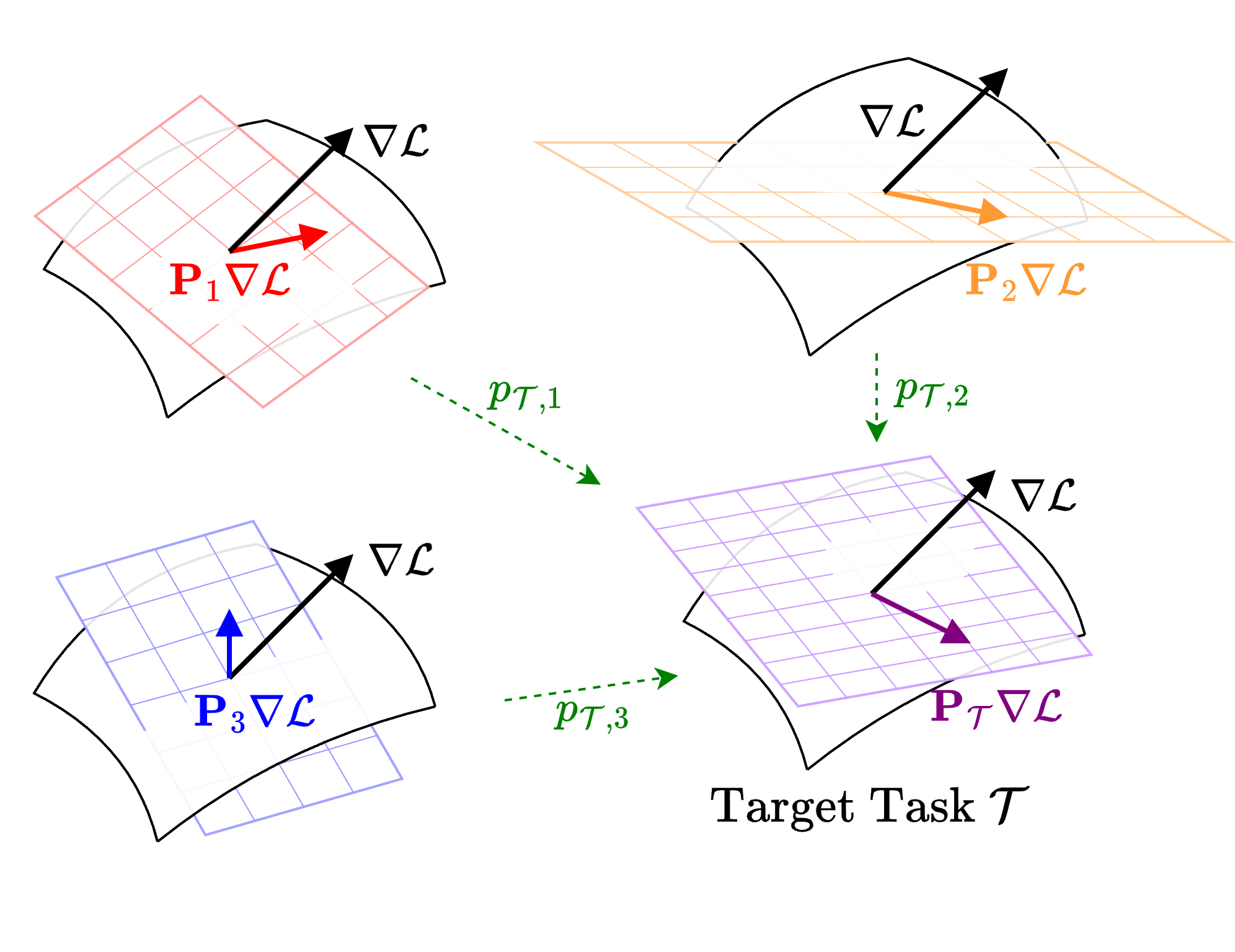}
        \caption{Illustration of forming a Task-Specific Preconditioner based on three DSPs that have been meta-trained for three meta-training domains.}
        \label{subfig:TSP_illustration}
    \end{minipage}
\end{figure*}

While both approaches have improved CDFSL performance through adaptation mechanism, a common limitation persists in the optimization strategies employed by these methods. Specifically, both approaches employ a fixed optimization strategy across different target tasks. 
However, Figure~\ref{subfig:motivation1} shows that the optimal choice of optimizer may vary significantly depending on the given domain or target task. This implies that the performance can be significantly improved by adapting an optimization strategy to align well with the target domain and task. However, devising an effective and reliable scheme for its implementation has been challenging. 

One promising approach for establishing a robust adaptive optimization scheme is to leverage Preconditioned Gradient Descent (PGD)~\cite{himmelblau2018applied}. PGD operates by specifying a preconditioning matrix, often referred to as a \textit{preconditioner}, which re-scales the geometry of the parameter space. 
In the field of machine learning, previous research has shown that if the preconditioner is positive definite~(PD), it establishes a valid Riemannian metric, which represents the geometric characteristics~(e.g., curvature) of the parameter space and steers preconditioned gradients in the direction of steepest descent~\cite{amari1967theory, amari1996neural, amari1998natural1, amari1998natural2}. 
While the effectiveness of positive definiteness in PGD is supported by existing theoretical findings, its efficacy as an adaptive optimization scheme in CDFSL can be examined through a simple comparison. 
In Figure~\ref{subfig:motivation2}, we compare PGD with and without a PD constraint for the preconditioner on the Meta-Dataset. Without a PD constraint, PGD shows markedly inferior performance, especially in unseen domains. Conversely, with a PD constraint, PGD consistently exhibits performance improvements across seen and unseen domains compared to the baseline using GD. This supports the pivotal role of positive definiteness in PGD for CDFSL. 

Inspired by these findings, we introduce a novel adaptation mechanism named Task-Specific Preconditioned gradient descent~(TSP). 
In our approach, we establish a Task-Specific Preconditioner that is constrained to be positive definite and adapt it to the specific nature of the target task. 
This preconditioner consists of two components. 
The first component is the Domain-Specific Preconditioners~(DSPs), which are uniquely defined for each meta-training domain and meta-trained on tasks sampled from these domains through bi-level optimization during the meta-training phase. 
The second component is task-coefficient, which approximates the compatibility between the target task and each meta-training domain. Figure~\ref{subfig:TSP_illustration} illustrates the construction of the Task-Specific Preconditioner. For a given target task ${\mathcal{T}}$, the Task-Specific Preconditioner $\mathbf{P}_{\mathcal{T}}$ is constructed by linearly combining the DSPs ${\mathbf{P}_k}$ from multiple seen domains, with each weighted by the corresponding task-coefficient ${p_{\mathcal{T},k}}$. This process produces a preconditioner specifically adapted to the geometric characteristics of the target task’s parameter space. By integrating knowledge from multiple seen domains, TSP distinguishes itself from traditional PGD techniques, such as GAP~\cite{kang2023meta}, which are discussed further in Section~\ref{sec:discussion}. 
Applying our approach to state-of-the-art CDFSL methods, such as TSA or TA$^2$-Net, significantly enhances performance on Meta-Dataset. 
For example, in multi-domain settings, applying TSP to TA$^2$-Net~\cite{guo2023task} achieves the best performance across all datasets. 

\section{Related Works}
\label{sec:related_works}
\paragraph{Meta-Learning for Few-Shot Learning}
Until recently, numerous approaches in the field of few-shot learning have adopted the meta-learning framework. 
These approaches can be mainly divided into three types: metric-based, model-based, and optimization-based methods. Metric-based methods~\cite{garcia2017few, sung2018learning, snell2017prototypical, oreshkin2018tadam} train a feature encoder to extract features from support and query samples. They employ a nearest neighbor classifier with various distance functions to calculate similarity scores for predicting the labels of query samples. 
Model-based methods~\cite{santoro2016meta, munkhdalai2017meta, mishra2017simple, garnelo2018conditional} train an encoder to generate task-specific models from a few support samples. Optimization-based methods~\cite{ravi2016optimization, finn2017model, yoon2018bayesian, rajeswaran2019meta} train a model that can quickly adapt to new tasks with a few support samples, employing a bi-level optimization.  
In our method, we employ the bi-level optimization used in the optimization-based methods. 

\paragraph{Cross-Domain Few-Shot Learning~(CDFSL)}
Recent CDFSL methods define the universal model as a deep neural network and partition it into task-agnostic and task-specific parameters.  
%
The task-agnostic parameters represent generic characteristics that are valid for a range of tasks from various domains. On the other hand, the task-specific parameters represent adaptable attributes that are optimized to the target tasks through an adaptation mechanism. 
Task-agnostic parameters can be designed as a single network or multiple networks. 
The single network is trained on a large dataset from single domain~\cite{requeima2019fast, bateni2020improved, bateni2022enhancing, liu2021multi} or multiple domains~\cite{triantafillou2021learning, li2021universal, li2022cross, guo2023task}, whereas the multiple networks are trained individually on each domain~\cite{dvornik2020selecting, liu2020universal}. Task-specific parameters can be designed as selection parameters~\cite{dvornik2020selecting, liu2020universal}, pre-classifier transformation~\cite{li2021universal, li2022cross, guo2023task}, Feature-wise Linear Modulate~(FiLM) layer~\cite{requeima2019fast, bateni2020improved, bateni2022enhancing, liu2021multi, triantafillou2021learning}, or Residual Adapter~(RA)~\cite{li2022cross, guo2023task}. As the adaptation mechanism for the task-specific parameters, several studies~\cite{requeima2019fast,bateni2020improved, bateni2022enhancing, liu2020universal, liu2021multi} meta-learn an auxiliary network, which generates task-specific parameters adapted to the target task. On the other hand, other studies~\cite{dvornik2020selecting, li2021universal, triantafillou2021learning, li2022cross} employ gradient descent to adapt task-specific parameters to the target task. In our work, we propose a novel adaptation mechanism in the form of a task-specific optimizer, which adapts task-specific parameters to the target task. 

\paragraph{Preconditioned Gradient Descent in Meta-Learning}
In meta-learning, several optimization-based approaches~\cite{li2017meta, lee2018gradient, park2019meta, rajasegaran2020meta, simon2020modulating, zhao2020meta, von2021learning, kang2023meta} have incorporated Preconditioned Gradient Descent (PGD) to adapt network's parameters to the target task~(i.e., inner-level optimization). They meta-learn a preconditioning matrix, called a preconditioner, which is utilized to precondition the gradient. The preconditioner was kept static in most of the previous works~\cite{li2017meta, lee2018gradient, park2019meta,zhao2020meta,von2021learning}. Several prior studies have devised preconditioners tailored to adapt either per inner step~\cite{rajasegaran2020meta}, per task~\cite{simon2020modulating}, or both simultaneously~\cite{kang2023meta}. 
Motivated by previous works~\cite{amari1967theory, amari1996neural, amari1998natural1,kakade2001natural,amari1998natural2}, \cite{kang2023meta} recently investigated the constraint of the preconditioner to satisfy the condition for a Riemannian metric (i.e., positive definiteness). They demonstrated that enforcing this constraint on the preconditioner was essential for improving the performance in few-shot learning. In our study, we propose a novel preconditioned gradient descent method with meta-learned task-specific preconditioner that guarantees positive definiteness for improving performance in CDFSL. 
\section{Backgrounds}
\label{sec:backgrounds}
\paragraph{Task Formulation for Meta-Learning in CDFSL}
In CDFSL, task $\mathcal{T}$ is formulated differently compared to traditional few-shot learning. In traditional few-shot learning, tasks are sampled from a single domain, resulting in the same form in both meta-training and meta-testing:
\begin{equation}
    \text{meta-training and meta-testing: }\mathcal{T}=\{\mathcal{S}_{\mathcal{T}}, \mathcal{Q}_{\mathcal{T}}\}
\end{equation}
where $\mathcal{S}_{\mathcal{T}}$ is a support set and $\mathcal{Q}_{\mathcal{T}}$ is a query set. 
On the other hand, in CDFSL, tasks are sampled from multiple domains, leading to different forms in meta-training and meta-testing:
\begin{equation}
    \begin{split}
        & \text{meta-training: }\mathcal{T}=\{\mathcal{S}_{\mathcal{T}} \mathcal{Q}_{\mathcal{T}},d_{\mathcal{T}}\},\\
        & \text{meta-testing: }\mathcal{T}=\{\mathcal{S}_{\mathcal{T}} \mathcal{Q}_{\mathcal{T}}\},
    \end{split}
\end{equation}
where $d_{\mathcal{T}}$ is a domain label indicating the domain from which the task was sampled. For instance, the domain label is an integer between $1$ and $K$ for $K$ domains~(i.e., $1\leq d_{\mathcal{T}}\leq K$). 

\paragraph{Bi-level Optimization in Meta-Learning}
Bi-level optimization~\cite{rajeswaran2019meta} consists of two levels of main optimization processes: inner-level and outer-level optimizations. 
Let $f_{\theta(\phi)}$ be a model, where the parameter $\theta(\phi)$ is parameterized by the meta-parameter $\phi$. For a task $\mathcal{T}=\{\mathcal{S}_{\mathcal{T}},\mathcal{Q}_{\mathcal{T}}\}$, the inner-level optimization is defined as:
\begin{equation}
\label{eqn:definition_inner_opt}
    \begin{split}
        \theta_{\mathcal{T},T}(\phi)=\theta_{\mathcal{T}, 0}(\phi)-\alpha_{\text{in}}\cdot\sum_{t=0}^{T-1}\nabla_{\theta}\mathcal{L}_{\text{in}}(\theta_{\mathcal{T},t}(\phi);\mathcal{S}_{\mathcal{T}})
    \end{split}
\end{equation}
where $\theta_{\mathcal{T},0}(\phi)=\theta(\phi)$, $\alpha_{\text{in}}$ is the learning rate for the inner-level optimization, $\mathcal{L}_{\text{in}}$ is the inner-level’s loss function, and $T$ is the total number of gradient descent steps.
With $\mathcal{Q}_{\mathcal{T}}$ in each task, we can define outer-level optimization as:
\begin{equation}
    \phi\leftarrow\phi-\alpha_{\text{out}}\cdot\nabla_{\phi}\mathbb{E}_{\mathcal{T}}\Big[\mathcal{L}_{\text{out}}(\theta_{\mathcal{T}, T}(\phi);\mathcal{Q}_{\mathcal{T}})\Big]
\end{equation}
where $\alpha_{\text{out}}$ is the learning rate for the outer-level optimization, and $\mathcal{L}_{\text{out}}$ is the outer-level’s loss function. 

\paragraph{Preconditioned Gradient Descent~(PGD)}
PGD is a technique that minimizes empirical risk by using a gradient update with a preconditioner that re-scales the geometry of the parameter space. 
Given model parameters $\theta$ and task $\mathcal{T} = \{\mathcal{S}_{\mathcal{T}}, \mathcal{Q}_{\mathcal{T}}\}$, we can formally define the preconditioned gradient descent with a preconditioner $\mathbf{P}$ as follows:
\begin{equation}
    \label{eqn:pgd}
    \theta_{\mathcal{T},t}=\theta_{\mathcal{T},t-1}-\alpha\cdot\mathbf{P}\nabla_{\theta}\mathcal{L}(\theta_{\mathcal{T},t-1};\mathcal{S}_{\mathcal{T}}),\,\,\,\,t=1,\cdots
\end{equation}
where $\theta_{\mathcal{T},0}=\theta$, $\mathcal{L}(\theta_{\mathcal{T},t};\mathcal{S}_{\mathcal{T}})$ is the empirical loss associated with the task $\mathcal{T}$, and $\theta_{\mathcal{T},t}$ is the parameters. 
When the preconditioner $\mathbf{P}$ is chosen to be the identity matrix $\mathbf{I}$, Eq.~(\ref{eqn:pgd}) becomes the standard Gradient Descent~(GD). 
The choice of $\mathbf{P}$ to leverage second-order information offers several options, including the inverse Fisher information matrix $\mathbf{F}^{-1}$, leading to the Natural Gradient Descent~(NGD)~\cite{amari1998natural1}, the inverse Hessian matrix $\mathbf{H}^{-1}$, corresponding to Newton's method~\cite{lecun2002efficient}, and the diagonal matrix estimation with the past gradients, which results in adaptive gradient methods~\cite{duchi2011adaptive, kingma2014adam}. They often reduce the effect of pathological curvature and speed up the optimization~\cite{amari2020does}. 

\paragraph{Dataset Classifier} 
In CDFSL, Dataset Classifier~\cite{triantafillou2021learning} reads a support set in a few-shot task and predicts from which of the training datasets it was sampled. Formally, let $\mathcal{T}=\{\mathcal{S}_{\mathcal{T}}, \mathcal{Q}_{\mathcal{T}}, d_{\mathcal{T}}\}$ be a train task sampled from $K$ domains. Let $g$ be a dataset classifier that takes the support set $\mathcal{S}_{\mathcal{T}}$ as input and generates logits as follows:
\begin{equation}
    g(\mathcal{S}_{\mathcal{T}})= z_{\mathcal{T}}=(z_{\mathcal{T},1},\cdots,z_{\mathcal{T},K}) \in \mathbb{R}^{K}
\end{equation} 
In~\cite{triantafillou2021learning}, the dataset classifier $g$ is trained to minimize the cross-entropy loss for the dataset classification problem~(i.e., classification problem with $K$ classes). 

\section{Method}
\begin{figure}[t]
    \centering
    \begin{subfigure}{0.48\textwidth}
        \includegraphics[width=\textwidth]{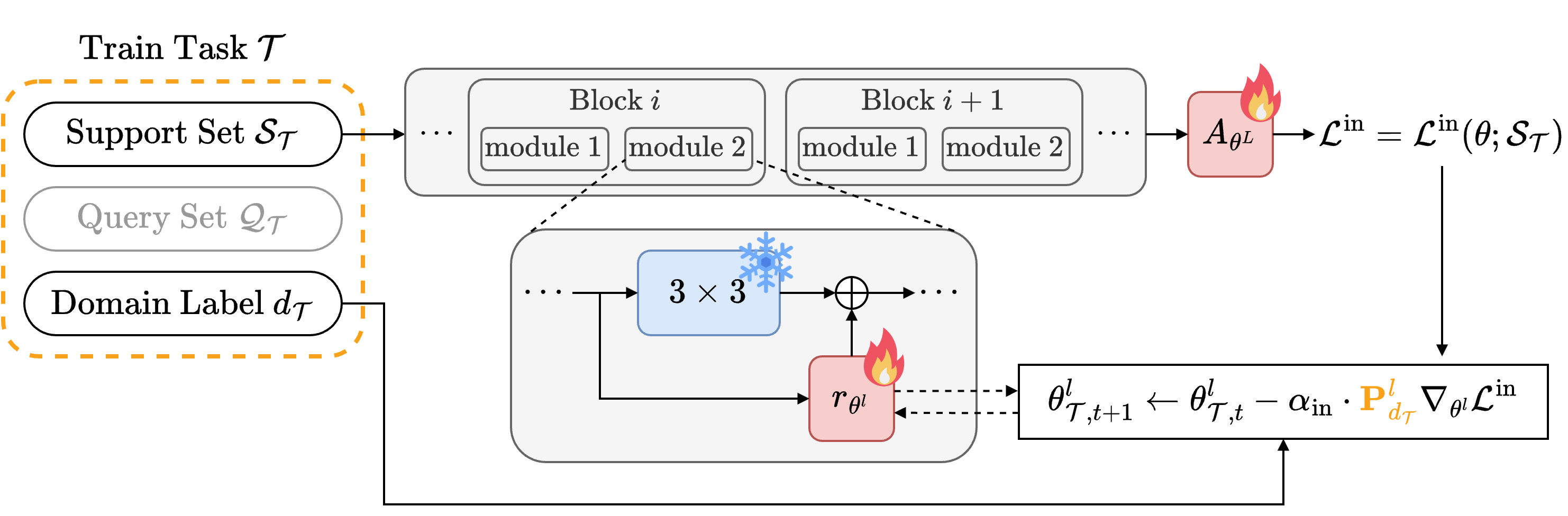}
        \caption{}
        \label{subfig:tsp_train}
    \end{subfigure}
    \begin{subfigure}{0.48\textwidth}
        \includegraphics[width=\textwidth]{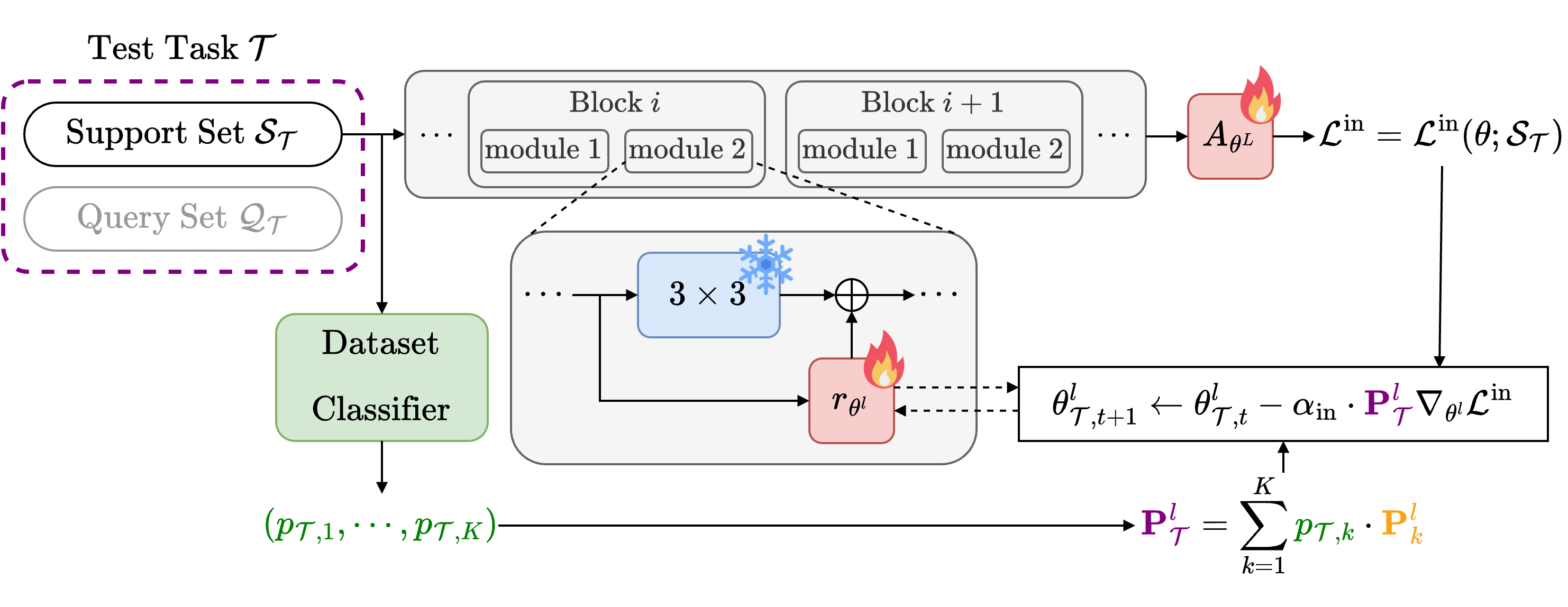}
        \caption{}
        \label{subfig:tsp_test}
    \end{subfigure}
    \caption{
    (a) PGD with Domain-Specific Preconditioner~(DSP) in the inner-level optimization. During meta-training, for a train task $\mathcal{T}$, DSP is chosen based on the domain label $d_{\mathcal{T}}$, and each task-specific parameter $\theta^l$ are optimized using PGD with the selected DSP $\mathbf{P}^l_{d_{\mathcal{T}}}$. 
    (b) PGD with Task-Specific Preconditioner. During meta-testing, for a test task, each Task-Specific Preconditioner $\mathbf{P}^l_{\mathcal{T}}$ is contructed using DSPs and task-coefficients generated by Dataset Classifier. Each task-specific parameter $\theta^l$ is then then optimized using PGD with $\mathbf{P}^l_{\mathcal{T}}$. 
    }
    \label{fig:tsp}
\end{figure}

In this section, we propose a novel adaptation mechanism named Task-Specific Preconditioned gradient descent~(TSP). 
We first introduce Domain-Specific Preconditioner~(DSP) and task-coefficients. Then, we describe the construction of Task-Specific Preconditioner using DSP and task-coefficients. Lastly, we show the positive definiteness of Task-Specific Preconditioner, which establishes it as a valid Riemannian metric. The algorithm for the training and testing procedures is provided in Appendix B.
\subsection{Domain-Specific Preconditioner~(DSP)}
\label{subsec:dsp}
Consider $L$ task-specific parameters $\theta=\{\theta^l\in\mathbb{R}^{m_l\times m_l}\}_{l=1}^L$. For $K$ domains, we first define meta-parameters $\mathcal{M}_1, \cdots, \mathcal{M}_K$ as follows:
\begin{equation}
    \mathcal{M}_k=\{\mathbf{M}^l_{k}\in\mathbb{R}^{m_l\times m_l}\}_{l=1}^L,\,\,\,\,k=1,\cdots,K
\end{equation}
Then, for all $l$, we define Domain-Specific Preconditioners~(DSPs) $\mathbf{P}^l_k$ using the meta-parameters as follows:
\begin{equation}
\label{eqn:definition_of_dsp}
    \mathbf{P}^l_{k}=\mathbf{M}_{k}^{l\mathbf{T}} \mathbf{M}^l_{k} + \mathbf{I},\,\,\,\,k=1,\cdots,K
\end{equation}
We compare various DSP designs~(See Table~\ref{tab:Ablation-pd preconditioners}) in Section~\ref{sec:ablation} and choose the form of Eq.~(\ref{eqn:definition_of_dsp}). 
Through bi-level optimization, DSPs can be meta-learned as follows. 

\subsubsection{Inner-level Optimization}
For each train task $\mathcal{T}=\{\mathcal{S}_{\mathcal{T}}, \mathcal{Q}_{\mathcal{T}},d_{\mathcal{T}}\}$, in the inner-level optimization, we optimize the task-specific parameters $\theta$ through preconditioned gradient descent using $\mathbf{P}^l_{d_{\mathcal{T}}}$, updating $\theta$ as follows:
\begin{equation}
\label{eqn:dsp}
    \theta^l_{\mathcal{T}, T}=\theta^l_{\mathcal{T},0} - \alpha_{\text{in}}\cdot\sum^{T-1}_{t=0}\mathbf{P}^l_{d_{\mathcal{T}}}\nabla_{\theta^l_{\mathcal{T},t}}\mathcal{L}_{\text{in}}(\theta_{\mathcal{T},t};\mathcal{S}_{\mathcal{T}}),
\end{equation}
where $\theta^l_{\mathcal{T}, 0}=\theta^l$, $\alpha_{\text{in}}$ is the learning rate for the inner-level optimization, $T$ is the total number of gradient descent steps, and $\mathcal{L}_{\text{in}}$ is the inner-level's loss function. 

\subsubsection{Outer-level Optimization}
In the outer-level optimization, we meta-learn meta-parameters $\mathcal{M}_1,\cdots \mathcal{M}_K$ as follows:
\begin{equation}
\label{eqn:definition_outer_opt}
    \mathcal{M}_k \leftarrow \mathcal{M}_k - \alpha_{\text{out}}\cdot\nabla_{\mathcal{M}_k}\mathbb{E}_{\mathcal{T}}\Big[\mathcal{L}_{\text{out}}(\theta_{\mathcal{T},T};\mathcal{Q}_{\mathcal{T}})\Big],\,\,\,\,k=1,\cdots,K
\end{equation}
where $\alpha_{\text{out}}$ is the learning rate for outer-level optimization and $\mathcal{L}_{\text{out}}$ is the outer-level’s loss function.

\subsection{Task-coefficients}
\label{subsec:tc}
Consider the dataset classifier $g$. 
Given a train task $\mathcal{T}=\{\mathcal{S}_{\mathcal{T}},\mathcal{Q}_{\mathcal{T}},d_{\mathcal{T}}\}$, we define task-coefficients $p_{\mathcal{T}, 1}\cdots,p_{\mathcal{T}, K}$ as follows:
\begin{equation}
\label{eqn:task_coefficients}
    (p_{\mathcal{T}, 1},\cdots, p_{\mathcal{T},K}) = \text{Softmax}(z_{\mathcal{T},1},\cdots,z_{\mathcal{T},K})
\end{equation}
where $g(\mathcal{S}_{\mathcal{T}})=(z_{\mathcal{T},1},\cdots,z_{\mathcal{T},K})$. 
Note that we use the sigmoid function instead of softmax in the single-domain setting because the output dimension of the dataset classifier is one. 
While \citet{triantafillou2021learning} updates the parameters of $g$ to minimize only the cross-entropy loss $\mathcal{L}_{\text{CE}}$ with respect to the dataset label $d_{\mathcal{T}}$, we train the dataset classifier $g$ to minimize the following augmented loss:
\begin{equation}
\label{eqn:loss_for_dataset_classifier}
    \mathcal{L}_{\text{CE}} + \lambda\cdot\mathcal{L}_{\text{Aux}}
\end{equation}
where $\lambda$ is a regularization parameter and $\mathcal{L}_{\text{Aux}}$ is the auxiliary loss, defined as follows:
\begin{equation}
    \mathcal{L}_{\text{Aux}}= \mathbb{E}_{\mathcal{T}}\Big[\mathcal{L}_{\text{out}}(\theta_{\mathcal{T},T};\mathcal{Q}_{\mathcal{T}})\Big]
\end{equation}
Here, task-specific parameters $\theta^l_{\mathcal{T}, T}$ can be obtained as follows:
\begin{equation}
\label{eqn:14}
    \theta^l_{\mathcal{T}, T}=\theta^l_{\mathcal{T},0} - \alpha_{\text{in}}\cdot\sum^{T-1}_{t=0}\sum^K_{k=1}p_{\mathcal{T},k} \cdot \mathbf{P}^l_k\nabla_{\theta^l_{\mathcal{T},t}}\mathcal{L}_{\text{in}}(\theta_{\mathcal{T},t};\mathcal{S}_{\mathcal{T}})
\end{equation}
where $\mathbf{P}^l_k$ is the $l$-th DSP of domain $k$.
In Eq.~(\ref{eqn:loss_for_dataset_classifier}), the cross-entropy loss guides the dataset classifier to prioritize the ground-truth domain of the support set. 
Concurrently, the auxiliary loss guides toward DSPs that minimize any adverse effects on the performance of the query set during the inner-level optimization. 

\begin{table*}[t]
\setlength{\tabcolsep}{1.5mm}
\centering
\caption{Performance comparison to state-of-the-art methods in a multi-domain setting. Mean accuracy and 95$\%$ confidence interval are reported. The best results are highlighted in \textbf{bold}.  TSP\textsuperscript{\dag} denotes TSP applied on TSA. TSP\textsuperscript{\dag\dag} denotes TSP applied on TA$^2$-Net.}
\label{tab:MDL results}
\small
\begin{tabular}{ccccccccc|cc}
    \hline
    Test   Dataset   & SUR & URT & FLUTE & tri-M & URL  & TSA & TA$^2$-Net & MOKD & TSP\textsuperscript{\dag}  & TSP\textsuperscript{\dag\dag}   \\
    \hline
    ImageNet         & 56.2$\pm$1.0 & 56.8$\pm$1.1 & 58.6$\pm$1.0 & 51.8$\pm$1.1 & 58.8$\pm$1.1 & 59.5$\pm$1.0 & 59.6$\pm$1.0 & 57.3$\pm$1.1 & 60.5$\pm$1.0 & \textbf{60.7$\pm$1.0} \\
    Omniglot         & 94.1$\pm$0.4 & 94.2$\pm$0.4 & 92.0$\pm$0.6  & 93.2$\pm$0.5 & 94.5$\pm$0.4 & 94.9$\pm$0.4 & 95.5$\pm$0.4 & 94.2$\pm$0.5 & 95.6$\pm$0.4 & \textbf{96.0$\pm$0.4} \\
    Aircraft         & 85.5$\pm$0.5 & 85.8$\pm$0.5 & 82.8$\pm$0.7  & 87.2$\pm$0.5 & 89.4$\pm$0.4 & 89.9$\pm$0.4 & 90.5$\pm$0.4 & 88.4$\pm$0.5 & 90.5$\pm$0.4 & \textbf{91.2$\pm$0.4} \\
    Birds            & 71.0$\pm$1.0 & 76.2$\pm$0.8 & 75.3$\pm$0.8  & 79.2$\pm$0.8 & 80.7$\pm$0.8 & 81.1$\pm$0.8 & 81.4$\pm$0.8 & 80.4$\pm$0.8 & 82.3$\pm$0.7 & \textbf{82.5$\pm$0.7} \\
    Textures         & 71.0$\pm$0.8 & 71.6$\pm$0.7 & 71.2$\pm$0.8  & 68.8$\pm$0.8 & 77.2$\pm$0.7 & 77.5$\pm$0.7 & 77.4$\pm$0.7 & 76.5$\pm$0.7 & 78.6$\pm$0.6 & \textbf{79.1$\pm$0.6} \\
    Quick   Draw     & 81.8$\pm$0.6 & 82.4$\pm$0.6 & 77.3$\pm$0.7  & 79.5$\pm$0.7 & 82.5$\pm$0.6 & 81.7$\pm$0.6 & 82.5$\pm$0.6 & 82.2$\pm$0.6 & 83.0$\pm$0.7 & \textbf{83.2$\pm$0.6} \\
    Fungi            & 64.3$\pm$0.9 & 64.0$\pm$1.0 & 48.5$\pm$1.0  & 58.1$\pm$1.1 & 68.1$\pm$0.9 & 66.3$\pm$0.8 & 66.3$\pm$0.9 & 68.6$\pm$1.0 & 68.6$\pm$0.9 & \textbf{69.7$\pm$0.8} \\
    VGG Flower       & 82.9$\pm$0.8 & 87.9$\pm$0.6 & 90.5$\pm$0.5  & 91.6$\pm$0.6 & 92.0$\pm$0.5 & 92.2$\pm$0.5 & 92.6$\pm$0.4 & 92.5$\pm$0.5 & 93.3$\pm$0.4 & \textbf{93.4$\pm$0.4} \\ \hline
    Traffic   Sign   & 51.0$\pm$1.1 & 48.2$\pm$1.1 & 63.0$\pm$1.0  & 58.4$\pm$1.1 & 63.3$\pm$1.1 & 82.8$\pm$1.0 & 87.4$\pm$0.8 & 64.5$\pm$1.1 & 88.5$\pm$0.7 & \textbf{89.4$\pm$0.8} \\
    MSCOCO           & 52.0$\pm$1.1 & 51.5$\pm$1.1 & 52.8$\pm$1.1  & 50.0$\pm$1.0 & 57.3$\pm$1.0 & 57.6$\pm$1.0 & 57.9$\pm$0.9 & 55.5$\pm$1.0 & 58.5$\pm$0.9 & \textbf{59.8$\pm$0.9} \\
    MNIST            & 94.3$\pm$0.4 & 90.6$\pm$0.5 & 96.2$\pm$0.3  & 95.6$\pm$0.5 & 94.7$\pm$0.4 & 96.7$\pm$0.4 & 97.0$\pm$0.4 & 95.1$\pm$0.4 & \textbf{97.1$\pm$0.3} & \textbf{97.1$\pm$0.4} \\
    CIFAR-10         & 66.5$\pm$0.9 & 67.0$\pm$0.8 & 75.4$\pm$0.8  & 78.6$\pm$0.7 & 74.2$\pm$0.8 & 82.9$\pm$0.7 & 82.1$\pm$0.8 & 72.8$\pm$0.8 & 83.5$\pm$0.7 & \textbf{83.7$\pm$0.8} \\
    CIFAR-100        & 56.9$\pm$1.1 & 57.3$\pm$1.0 & 62.0$\pm$1.0  & 67.1$\pm$1.0 & 63.5$\pm$1.0 & 70.4$\pm$0.9 & 70.9$\pm$0.9 & 63.9$\pm$1.0 & 71.3$\pm$1.0 & \textbf{72.2$\pm$0.9} \\ \hline
    Avg   Seen   & 75.9 & 77.4 & 74.5  & 76.2  & 80.4 & 80.4 & 80.7 & 80.0 & 81.6 & \textbf{82.0}   \\
    Avg   Unseen & 64.1 & 62.9 & 69.9  & 69.9  & 70.6 & 78.1 & 79.1 & 70.3 & 79.8 & \textbf{80.4}   \\
    Avg   All    & 71.3 & 71.8 & 72.7  & 73.8  & 76.6 & 79.5 & 80.1 & 76.3 & 80.9 & \textbf{81.4}   \\ \hline
    Avg   Rank   & 8.8  & 8.2 & 8.0 & 7.8 & 5.5 & 4.3 & 3.2 & 5.8 & 1.9 & \textbf{1.0}   \\       \hline
\end{tabular}
\end{table*}
\subsection{Task-Specific Preconditioner}
\label{subsec:tsp}
Given a test task $\mathcal{T}=\{\mathcal{S}_{\mathcal{T}}, \mathcal{Q}_{\mathcal{T}}\}$, we define Task-Specific Preconditioner $\mathbf{P}^l_{\mathcal{T}}$ as follows:
\begin{equation}
\label{eqn:Definition of TSP}
    \mathbf{P}^l_{\mathcal{T}} = \sum^K_{k=1}p_{\mathcal{T},k} \cdot \mathbf{P}^l_k,\,\,\,\,l=1,\cdots,L
\end{equation}
where $\mathbf{P}^l_k$ is the $l$-th DSP of domain $k$, and $p_{\mathcal{T},k}$ is the task-coefficient for the given task $\mathcal{T}$ and domain $k$. 
By employing $\mathbf{P}^l_{\mathcal{T}}$ as the preconditioning matrix, we can define Task-Specific Preconditioned gradient descent (TSP), as follows:
\begin{equation}
\label{eqn: TSP update}
    \theta^l_{\mathcal{T}, T}=\theta^l_{\mathcal{T},0} - \beta\cdot\sum^{T-1}_{t=0}\mathbf{P}^l_{\mathcal{T}}\nabla_{\theta^l_{\mathcal{T},t}}\mathcal{L}_{\text{in}}(\theta_{\mathcal{T},t};\mathcal{S}_{\mathcal{T}}),
\end{equation}
where $\beta$ is the learning rate used to adapt the task-specific parameters.

\subsection{Positive Definiteness of TSP's Preconditioner}
\label{subsec:pd of tsp}
A preconditioner satisfying positive definiteness ensures a valid Riemannian metric, which represents the geometric characteristics of the parameter space~\cite{amari1967theory, amari1996neural, amari1998natural1,kakade2001natural,amari1998natural2}. 
Task-Specific Preconditioner $\mathbf{P}^l_{\mathcal{T}}$ is designed to be a positive definite matrix, which is verified in Theorem~\ref{theorem: positive definiteness of TSP}. 
\begin{theorem}
    \label{theorem: positive definiteness of TSP}
    Let $ p_k \in [0, 1], k=1,\cdots,K$, be the task-coefficients satisfying $\sum^K_{k=1}p_k=1$. For the Domain-Specific Preconditioners $\mathbf{P}_k \in \mathbb{R}^{m \times m}, k=1,\cdots,K $, Task-Specific Preconditioner $\mathbf{P}$ defined as $\mathbf{P} = \sum^K_{k=1}p_k \cdot \mathbf{P}_k$ is positive definite. 
\end{theorem}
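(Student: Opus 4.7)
The statement is a standard fact about convex combinations of positive definite matrices, so the plan is to reduce it to two elementary observations and then combine them. First I would establish that every Domain-Specific Preconditioner $\mathbf{P}_k$ is itself positive definite. Using the definition $\mathbf{P}_k = \mathbf{M}_k^{\mathbf{T}}\mathbf{M}_k + \mathbf{I}$ from Eq.~(\ref{eqn:definition_of_dsp}), for any nonzero $x \in \mathbb{R}^m$ one computes
\begin{equation*}
    x^{\mathbf{T}} \mathbf{P}_k x \;=\; x^{\mathbf{T}} \mathbf{M}_k^{\mathbf{T}}\mathbf{M}_k x + x^{\mathbf{T}} x \;=\; \|\mathbf{M}_k x\|_2^2 + \|x\|_2^2 \;\geq\; \|x\|_2^2 \;>\; 0,
\end{equation*}
so each $\mathbf{P}_k$ is PD (indeed uniformly bounded below by $\mathbf{I}$, which is the key fact I actually need).

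Second, I would show that a convex combination with a valid coefficient vector preserves positive definiteness. For arbitrary nonzero $x$, the linearity of the quadratic form gives
\begin{equation*}
    x^{\mathbf{T}} \mathbf{P}\, x \;=\; \sum_{k=1}^{K} p_k \, x^{\mathbf{T}} \mathbf{P}_k x.
\end{equation*}
Each summand is nonnegative because $p_k \geq 0$ and $x^{\mathbf{T}} \mathbf{P}_k x > 0$ from the first step. To rule out the degenerate case where all summands vanish, I would invoke the constraint $\sum_{k=1}^{K} p_k = 1$, which forces at least one index $k^\ast$ with $p_{k^\ast} > 0$; for this index $p_{k^\ast}\, x^{\mathbf{T}} \mathbf{P}_{k^\ast} x \geq p_{k^\ast}\|x\|_2^2 > 0$, and hence $x^{\mathbf{T}}\mathbf{P} x > 0$.

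There is essentially no obstacle here — the argument is two short lines. The only point that requires a moment of care is making sure the convex-combination step cannot collapse to zero, which is exactly where the normalization $\sum p_k = 1$ (as opposed to merely $p_k \geq 0$) is used. Everything else follows from the additive structure of the quadratic form and the PD lower bound $\mathbf{P}_k \succeq \mathbf{I}$ supplied by the ``$+\mathbf{I}$'' term in the DSP definition.
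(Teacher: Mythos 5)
Your proposal is correct and follows essentially the same route as the paper's proof: first establish positive definiteness of each DSP via $x^{\mathbf{T}}\mathbf{P}_k x = \|\mathbf{M}_k x\|^2 + \|x\|^2 > 0$, then use linearity of the quadratic form together with the normalization $\sum_k p_k = 1$ to extract at least one strictly positive summand. The only cosmetic difference is that the paper also explicitly records the symmetry of $\mathbf{P}_k$ and $\mathbf{P}$ as part of verifying positive definiteness, which you may wish to state for completeness.
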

The proof is provided in Appendix C. 
%
Drawing from prior research~\cite{amari1967theory, amari1996neural, amari1998natural1,kakade2001natural, amari1998natural2}, a preconditioner satisfying positive definiteness promotes gradients to point toward the steepest descent direction while avoiding undesirable paths in the parameter space. As shown in Figure~\ref{subfig:motivation2}, positive definiteness improves CDFSL performance, especially in unseen domains. In Section~\ref{sec:discussion}, we will discuss why this property helps in CDFSL. 

\section{Experiments}
\subsection{Experimental Setup}
\subsubsection{Implementation Details} 
In the experiments, we use Meta-Dataset~\cite{triantafillou2019meta} that is the standard benchmark for evaluating the performance of CDFSL. 
To demonstrate the effectiveness of TSP as an adaptation mechanism, we apply it to the state-of-the-art CDFSL methods, TSA~\cite{li2022cross} and TA$^2$-Net~\cite{guo2023task}, which are publicly available as open-source. Following previous studies~\cite{bateni2022enhancing,triantafillou2021learning,li2021universal,li2022cross,guo2023task}, we adopted ResNet-18 as the backbone for the feature extractor. In all experiments, we follow the standard protocol described in \cite{triantafillou2019meta}. For the Dataset Classifier Loss, weighting factor $\lambda$ is set to 0.1, as it performs best compared to other values, as shown in Appendix D.1. 
Details of the Meta-Dataset, hyper-parameters, and additional implementation are available in Appendix E. 
\subsubsection{Baselines}
For the baselines, we compare our methods to the state-of-the-art CDFSL methods, including BOHB~\cite{saikia2020optimized}, SUR~\cite{dvornik2020selecting}, URT~\cite{liu2020universal}, Simple-CNAPS~\cite{bateni2020improved}, FLUTE~\cite{triantafillou2021learning}, tri-M~\cite{liu2021multi}, URL~\cite{li2021universal}, TSA~\cite{li2022cross}, TA$^2$-Net~\cite{guo2023task}, ALFA~\cite{baik2023learning}+Proto-MAML, GAP+Proto-MAML~\cite{kang2023meta}, and MOKD~\cite{tian2024mokd}. 
\begin{table*}[t]
\setlength{\tabcolsep}{1.5mm}
\centering
\caption{Performance comparison to state-of-the-art methods in a single-domain setting. Mean accuracy and 95$\%$ confidence interval are reported. The best results are highlighted in \textbf{bold}. TSP\textsuperscript{\dag} denotes TSP applied on TSA. TSP\textsuperscript{\dag\dag} denotes TSP applied on TA$^2$-Net.}
\label{tab:SDL results}
\small
\begin{tabular}{cccccccc|cc}
    \hline
    Test   Dataset   & \begin{tabular}[c]{@{}c@{}}ALFA+\\Proto-MAML\end{tabular} & BOHB & \begin{tabular}[c]{@{}c@{}}GAP+\\Proto-MAML\end{tabular} & FLUTE & TSA  & TA$^2$-Net & MOKD  & TSP\textsuperscript{\dag}  & TSP\textsuperscript{\dag\dag} \\ \hline
    ImageNet         & 52.8$\pm$1.1 & 51.9$\pm$1.1 & 56.7 & 46.9$\pm$1.1 & 59.5$\pm$1.1 & 59.3$\pm$1.1 & 57.3$\pm$1.1 & 60.1$\pm$1.1 & \textbf{60.6$\pm$1.1}  \\ \hline
    Omniglot         & 61.9$\pm$1.5 & 67.6$\pm$1.2 & 77.6 & 61.6$\pm$1.4 & 78.2$\pm$1.2 & 81.1$\pm$1.1 & 70.9$\pm$1.3 & 83.3$\pm$1.1 & \textbf{85.2$\pm$1.1} \\
    Aircraft         & 63.4$\pm$1.1 & 54.1$\pm$0.9 & 68.5 & 48.5$\pm$1.0 & 72.2$\pm$1.0 & 72.6$\pm$0.9 & 59.8$\pm$1.0 & 73.2$\pm$1.0 & \textbf{73.5$\pm$1.1} \\
    Birds            & 69.8$\pm$1.1 & 70.7$\pm$0.9 & 73.5 & 47.9$\pm$1.0 & 74.9$\pm$0.9 & 75.1$\pm$0.9 & 73.6$\pm$0.9 & 76.0$\pm$0.9 & \textbf{76.6$\pm$0.9}  \\
    Textures         & 70.8$\pm$0.9 & 68.3$\pm$0.8 & 71.4 & 63.8$\pm$0.8 & 77.3$\pm$0.7 & 76.8$\pm$0.8 & 76.1$\pm$0.7 & 78.2$\pm$0.7 & \textbf{78.3$\pm$0.7}  \\
    Quick   Draw     & 59.2$\pm$1.2 & 50.3$\pm$1.0 & 65.4 & 57.5$\pm$1.0 & 67.6$\pm$0.9 & 68.4$\pm$0.9 & 61.2$\pm$1.0 & 70.8$\pm$0.9 & \textbf{71.5$\pm$0.9} \\
    Fungi            & 41.5$\pm$1.2 & 41.4$\pm$1.1 & 38.6 & 31.8$\pm$1.0 & 44.7$\pm$1.0 & 45.3$\pm$1.0 & \textbf{47.0$\pm$1.1} & 46.6$\pm$1.0 & \textbf{47.0$\pm$1.0} \\
    VGG   Flower     & 86.0$\pm$0.8 & 87.3$\pm$0.6 & 86.8 & 80.1$\pm$0.9 & 90.9$\pm$0.6 & 91.0$\pm$0.6 & 88.5$\pm$0.6 & 91.8$\pm$0.5 & \textbf{92.2$\pm$0.6}  \\
    Traffic   Sign   & 60.8$\pm$1.3 & 51.8$\pm$1.0 & 66.9 & 46.5$\pm$1.1 & 82.5$\pm$0.8 & 84.1$\pm$0.7 & 61.6$\pm$1.1 & 87.5$\pm$0.8 & \textbf{88.7$\pm$0.8}  \\
    MSCOCO           & 48.1$\pm$1.1 & 48.0$\pm$1.0 & 46.8 & 41.4$\pm$1.0 & 59.0$\pm$1.0 & 58.0$\pm$1.0 & 55.3$\pm$1.0 & \textbf{59.4$\pm$1.0} & 58.6$\pm$1.0 \\
    MNIST            & -    & -    & 94.0 & 80.8$\pm$0.8 & 93.9$\pm$0.6 & 94.9$\pm$0.5 & 88.3$\pm$0.7 & 94.5$\pm$0.5 & \textbf{95.3$\pm$0.6} \\
    CIFAR-10         & -    & -    & 74.5 & 65.4$\pm$0.8 & 82.1$\pm$0.7 & 82.0$\pm$0.7 & 72.2$\pm$0.8 & 83.1$\pm$0.5 & \textbf{83.2$\pm$0.7}  \\
    CIFAR-100        & -    & -    & 63.2 & 52.7$\pm$1.1 & 70.7$\pm$0.9 & 70.8$\pm$0.9 & 63.1$\pm$1.0 & 71.2$\pm$0.9 & \textbf{72.8$\pm$0.9} \\ \hline
    Avg  Seen   & 52.8 & 51.9 & 56.7 & 46.9 & 59.5 & 59.3 & 57.3 & 60.1 & \textbf{60.6} \\
    Avg   Unseen & 62.4 & 59.9 & 68.9 & 56.5 & 74.5 & 75.0 & 68.1 & 76.3 & \textbf{76.9} \\
    Avg   All    & 61.4 & 59.1 & 68.0 & 55.8 & 73.3 & 73.8 & 67.3 & 75.0 & \textbf{75.7} \\ \hline
    Avg   Rank   & 7.0 & 7.5 & 6.1 & 8.9 & 3.7 & 3.4 & 5.1 & 2.0 & \textbf{1.2} \\   \hline  
\end{tabular}
\end{table*}
\subsection{Performance Comparison to State-of-The-Art Methods}
Following the experimental setup in \cite{li2022cross}, we first evaluate our method using multi-domain and single-domain feature extractors in Varying-Way Varying-Shot setting~(i.e., Multi-domain and Single-domain setting). Then, we assess our approach with the multi-domain feature extractor in more challenging Varying-Way Five-Shot and Five-Way One-Shot settings. 
We provide the performance comparison results for Varying-Way Five-Shot and Five-Way One-Shot settings in the Appendix F. 

\subsubsection{Multi-Domain Setting}

In Table~\ref{tab:MDL results}, we evaluate TSP by applying it to TSA and TA$^2$-Net, both of which employ URL~\cite{liu2021multi} as the multi-domain feature extractor. 
We report average accuracies over seen, unseen, and all domains, along with average rank following the previous works~\cite{liu2021multi,li2022cross, guo2023task}. TSP\textsuperscript{\dag} denotes TSP applied on TSA, while TSP\textsuperscript{\dag\dag} indicates TSP applied on TA$^2$-Net. TSP\textsuperscript{\dag} outperforms the previous state-of-the-art methods on 11 out of 13 datasets, and TSP\textsuperscript{\dag\dag} achieves the best results on all datasets. 
For example, TSP\textsuperscript{\dag\dag} outperforms the state-of-the-art method~(TA$^2$-Net) by 1.7\%, 3.4\%, 2.0\%, and 1.9\% on Textures, Fungi, Traffic Sign, and MSCOCO respectively. 
These results imply that TSP can construct a desirable task-specific optimizer that effectively adapt the task-specific parameters for a given target task. 


\subsubsection{Single-Domain Setting}
We evaluate TSP by applying it to TSA and TA$^2$-Net, both of which employ the single-domain feature extractor pretrained solely on the ImageNet dataset. 
In Table~\ref{tab:SDL results}, TSP\textsuperscript{\dag\dag} achieves the best results for 12 out of 13 datasets, while TSP\textsuperscript{\dag} leads in the remaining 1 datasets. Compared to recently proposed meta-learning methods based on PGD, such as Approximate GAP$+$Proto-MAML and GAP$+$Proto-MAML~\cite{kang2023meta}, both TSP\textsuperscript{\dag} and TSP\textsuperscript{\dag\dag} consistently outperform them across all 13 datasets by a significant margin. Furthermore, TSP\textsuperscript{\dag\dag} outperforms the previous best methods by a clear margin in several datasets such as Quick Draw~($+3.1\%$), Omniglot~($+4.1\%$), and Traffic Sign~($+4.6\%$). 
Despite being trained only on single dataset, TSP improves performance by effectively constructing a task-specific optimizer tailored to the target task.

\subsection{Ablation Studies}
\label{sec:ablation}
In this section, all ablation studies are performed using TSP\textsuperscript{\dag} in the multi-domain setting to isolate the effects originating from the RL model in TSP\textsuperscript{\dag\dag}. Additional ablation studies are provided in Appendix D. 

\subsubsection{Matrix Design for DSP}
\begin{table}
\setlength{\tabcolsep}{2.0mm}
\centering
\caption{Performance comparison of three TSPs with different DSP designs.}
\label{tab:Ablation-pd preconditioners}
\small
\begin{tabular}{ccccc}
    \hline
    DSP designs  & $\mathbf{LL^T}$\;\; & $\mathbf{LL^T+I}$ &\;\;$\mathbf{M^TM+I}$  \\ \hline
    Avg Seen   & 80.8 & 81.2 & \textbf{81.6} \\
    Avg Unseen & 79.0 & 79.4 & \textbf{79.8} \\
    Avg All    & 80.1 & 80.5 & \textbf{80.9} \\ \hline
\end{tabular}
\end{table}
To design Domain-Specific Preconditioner (DSP), we consider three matrix designs that guarantee positive definiteness. The first one is the product of a real-valued lower triangular matrix and its transpose~(i.e., $\mathbf{LL^T}$), where the lower triangular matrix $\mathbf{L}$ is constrained to have positive diagonals. This form is commonly known as the Cholesky factorization~\cite{horn2012matrix}. The second one is the addition of $\mathbf{LL^T}$ and the identity matrix~(i.e., $\mathbf{LL^T+I}$). 
The last one is the addition of the Gram matrix~\cite{horn2012matrix} and the identity matrix ~(i.e., $\mathbf{M^T}\mathbf{M}+\mathbf{I}$). In Table~\ref{tab:Ablation-pd preconditioners}, we compare three TSPs with these three DSP designs. Among them, the Gram matrix design achieves the highest average accuracies in both seen and unseen domains compared to the others. Therefore, we choose the Gram matrix design for DSP. 

\begin{table}
\setlength{\tabcolsep}{2.0mm}
\centering
\caption{Performance comparison of TSPs with and without a PD constraint. $\alpha$ is set to 0.1.}
\small
\label{tab:necessity of PD constraint}
    \begin{tabular}{ccc}
    \hline
    Preconditioner   & w/ PD constraint & w/o PD constraint \\ \hline
    Avg   Seen   & \textbf{81.6}             & 80.0              \\
    Avg   Unseen & \textbf{79.8}             & 73.8              \\
    Avg   All    & \textbf{80.9}             & 77.6              \\ \hline
\end{tabular}
\end{table}
\begin{table*}
\setlength{\tabcolsep}{2.0mm}
\small
\centering
\caption{The rates of non-PD Domain-Specific Preconditioners (DSPs) after meta-training without a positive definite constraint. For the ResNet-18 backbone, there are 17 DSP preconditioners for each domain. All DSPs are initialized as $0.1\cdot\mathbf{I}$. The average rate is provided in the right column.}
\label{tab:Non-PD rate w/o PD constraint}
\begin{tabular}{ccccccccc|c}
    \hline
    DSP         & ImageNet & Omniglot & Aircraft & Birds & Textures & Quick Draw & Fungi & VGG Flower & Average \\ \hline
    Non-PD rate & 0.24 & 0.29 & 0.35 & 0.24 & 0.35 & 0.35 & 0.18 & 0.29 & 0.29 \\ 
    \hline
\end{tabular}
\end{table*}

\section{Discussion}
In this section, all experiments are conducted using TSP\textsuperscript{\dag}.
\label{sec:discussion}
\paragraph{The Necessity of Positive Definite Constraint}
Even without a specific constraint of PD, one might assume that initializing the preconditioner as positive definite, such as $\alpha\cdot\mathbf{I}$, would maintain its positive definiteness throughout meta-training due to its significant role. 
However, as illustrated in Table~\ref{tab:necessity of PD constraint} and Table~\ref{tab:Non-PD rate w/o PD constraint}, this assumption does not hold. 
In Table~\ref{tab:necessity of PD constraint}, we compare preconditioners with and without a PD constraint, both initialized as positive definite. 
Specifically, the former adopts Task-Specific Preconditioner~(See Eq.~(\ref{eqn:Definition of TSP})), while the latter employs Task-Specific Preconditioner with DSP designed as $\mathbf{P}^l_k=\mathbf{M}^l_{k}$ and initialized as $\mathbf{M}^l_{k}=0.1\cdot\mathbf{I}$. Evaluations are conducted using the multi-domain feature extractor~(URL) in the multi-domain setting. 
After meta-training, DSPs without a PD constraint tend to lose positive definiteness as shown in Table~\ref{tab:Non-PD rate w/o PD constraint}, leading to poor performance as shown in Table~\ref{tab:necessity of PD constraint}. These findings underscore the necessity of explicitly constraining the preconditioner to maintain positive definiteness, as relying solely on optimization fails to preserve this crucial property. 

\paragraph{Positive Definite DSP Designs with and without the Identity Matrix}
\begin{table}
\small

\centering
\caption{Performance comparison of two positive definite DSP designs with and without adding an identity matrix.}
\label{tab:precondition with identity matrix}
\setlength{\tabcolsep}{2.0mm}
\begin{tabular}{ccc|cc}
    \hline
                   Setting & \multicolumn{2}{c}{
                   \begin{tabular}[c]
                   {@{}c@{}}Varying-Way\\ Varying-Shot\end{tabular}
                   } & \multicolumn{2}{|c}{
                   \begin{tabular}[c]{@{}c@{}}Varying-Way\\ Five-Shot\end{tabular}
                   }\\ \hline
                   DSP designs   & $\mathbf{L} \mathbf{L}^\mathbf{T}$ & $\mathbf{L} \mathbf{L}^\mathbf{T}+\mathbf{I}$ & $\mathbf{L} \mathbf{L}^\mathbf{T}$ & $\mathbf{L} \mathbf{L}^\mathbf{T}+\mathbf{I}$ \\ \hline
    Avg Seen   & 80.8 & \textbf{81.2} & \textbf{76.8} & 76.6\\
    Avg Unseen & 79.0 & \textbf{79.4} & \textbf{72.1} & 71.5\\
    Avg All    & 80.1 & \textbf{80.5} & \textbf{75.0} & 74.6\\ \hline
\end{tabular}
\end{table}
Apart from ensuring positive definiteness, a notable characteristic of our Gram matrix design $\mathbf{M^T}\mathbf{M}+\mathbf{I}$ is its inclusion of the identity matrix. To explore the impact of this inclusion, we compare two positive definite DSP designs: $\mathbf{LL^T}$ and $\mathbf{LL^T+I}$. 
We focus on these two DSP designs because $\mathbf{M^TM}$ does not guarantee positive definiteness. However, we also provide a comparison between $\mathbf{M^TM+I}$ and $\mathbf{M^TM}$ in Appendix G. 
The experiments are conducted using the multi-domain feature extractor~(URL). 
In Table~\ref{tab:precondition with identity matrix}, we observe that the DSP design with the added identity matrix performs better in the Varying-Way Varying-Shot setting but worse in the Varying-Way Five-Shot setting. 
This outcome aligns with prior theoretical findings~\cite{amari2020does} indicating that PGD performs better than GD in noisy gradient conditions, while GD excels when gradients are accurate. With more shots, gradients tend to be more accurate due to increased data. In the Varying-Way Varying Shot setting, where tasks typically involve more than five shots, gradients are more accurate, making GD more beneficial compared to the other setting. Including the identity matrix can be viewed as a regularization of PGD towards GD. Consequently, $\mathbf{L} \mathbf{L}^\mathbf{T} + \mathbf{I}$ aligns closer to GD compared to $\mathbf{L} \mathbf{L}^\mathbf{T}$, resulting in improved performance due to the abundance of shots in Varying-Way Varying-Shot setting. Conversely, in the Varying-Way Five-Shot setting, where tasks involve fewer shots, $\mathbf{L} \mathbf{L}^\mathbf{T}$ exhibits superior performance to $\mathbf{L} \mathbf{L}^\mathbf{T} + \mathbf{I}$ due to the scarcity of shots.

\begin{figure}
    \centering
    \begin{subfigure}[t]{0.2\textwidth}
        \includegraphics[width=\textwidth]{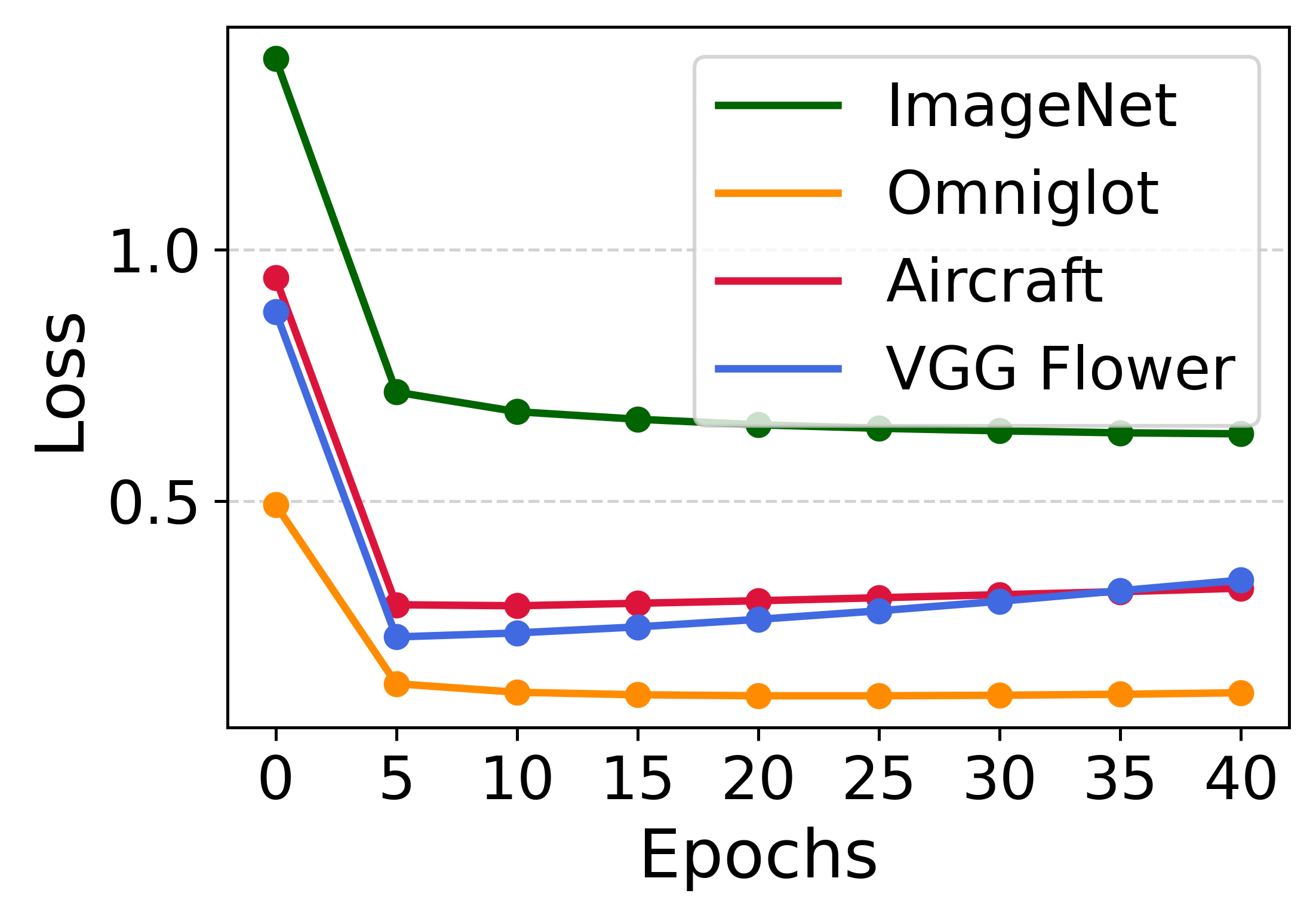}
        \subcaption[]{w/o PD, Seen}
        \label{subfig:npd_seen}
    \end{subfigure}
    \begin{subfigure}[t]{0.2\textwidth}
        \includegraphics[width=\textwidth]{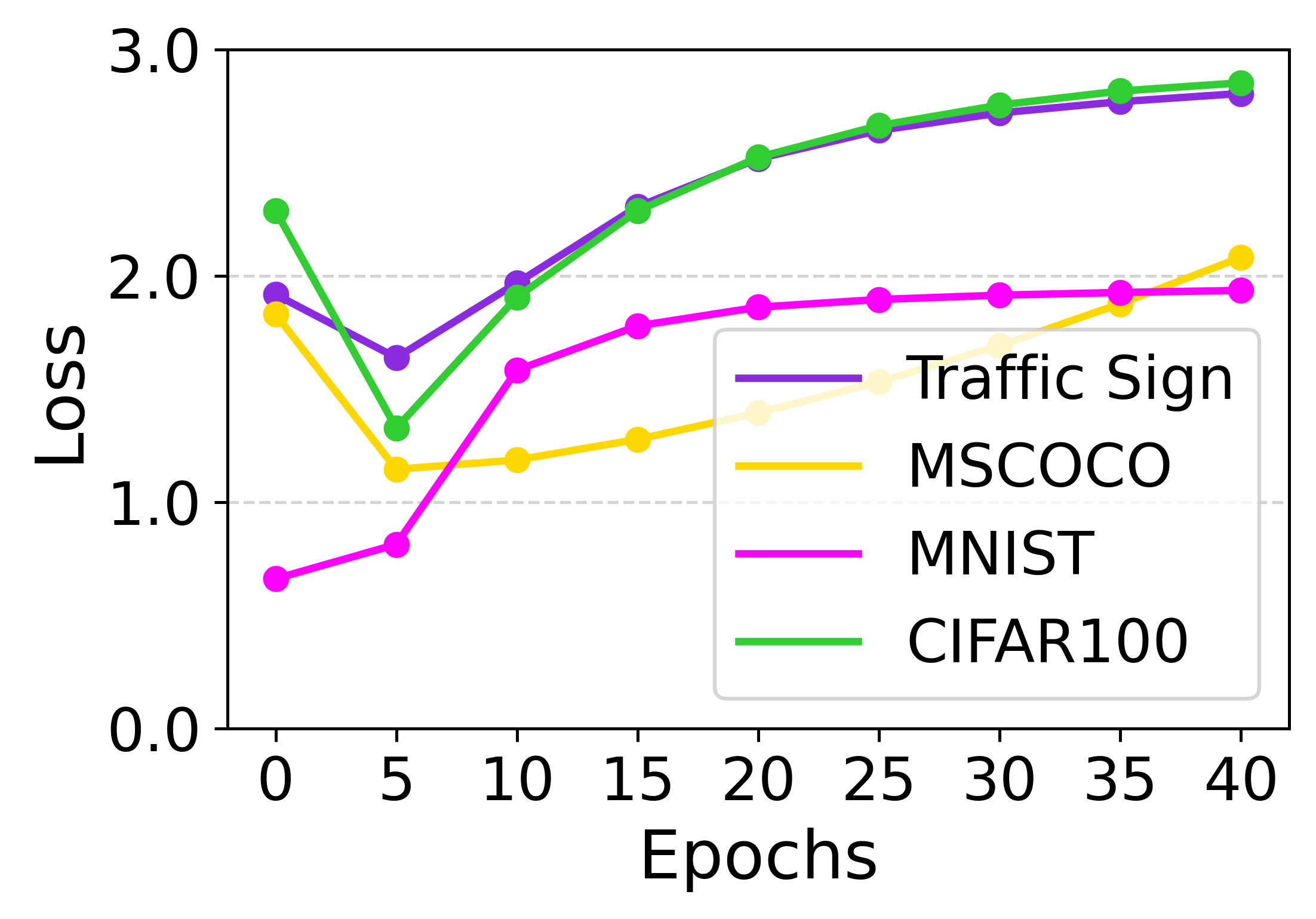}
        \subcaption[]{w/o PD, Unseen}
        \label{subfig:npd_unseen}
    \end{subfigure}
    \begin{subfigure}[t]{0.2\textwidth}
        \includegraphics[width=\textwidth]{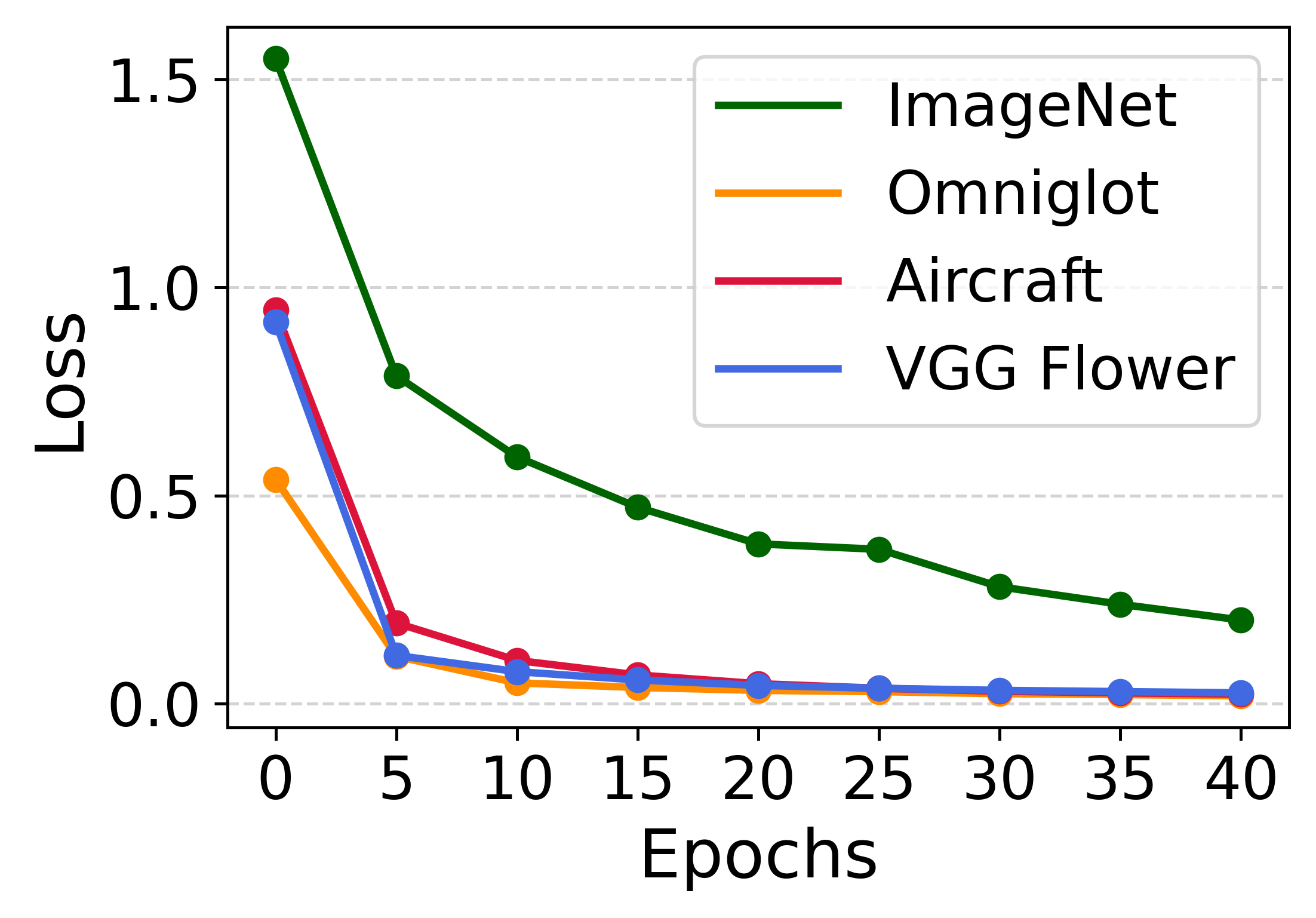}
        \subcaption[]{w/ PD, Seen}
        \label{subfig:pd_seen}
    \end{subfigure}
    \begin{subfigure}[t]{0.2\textwidth}
        \includegraphics[width=\textwidth]{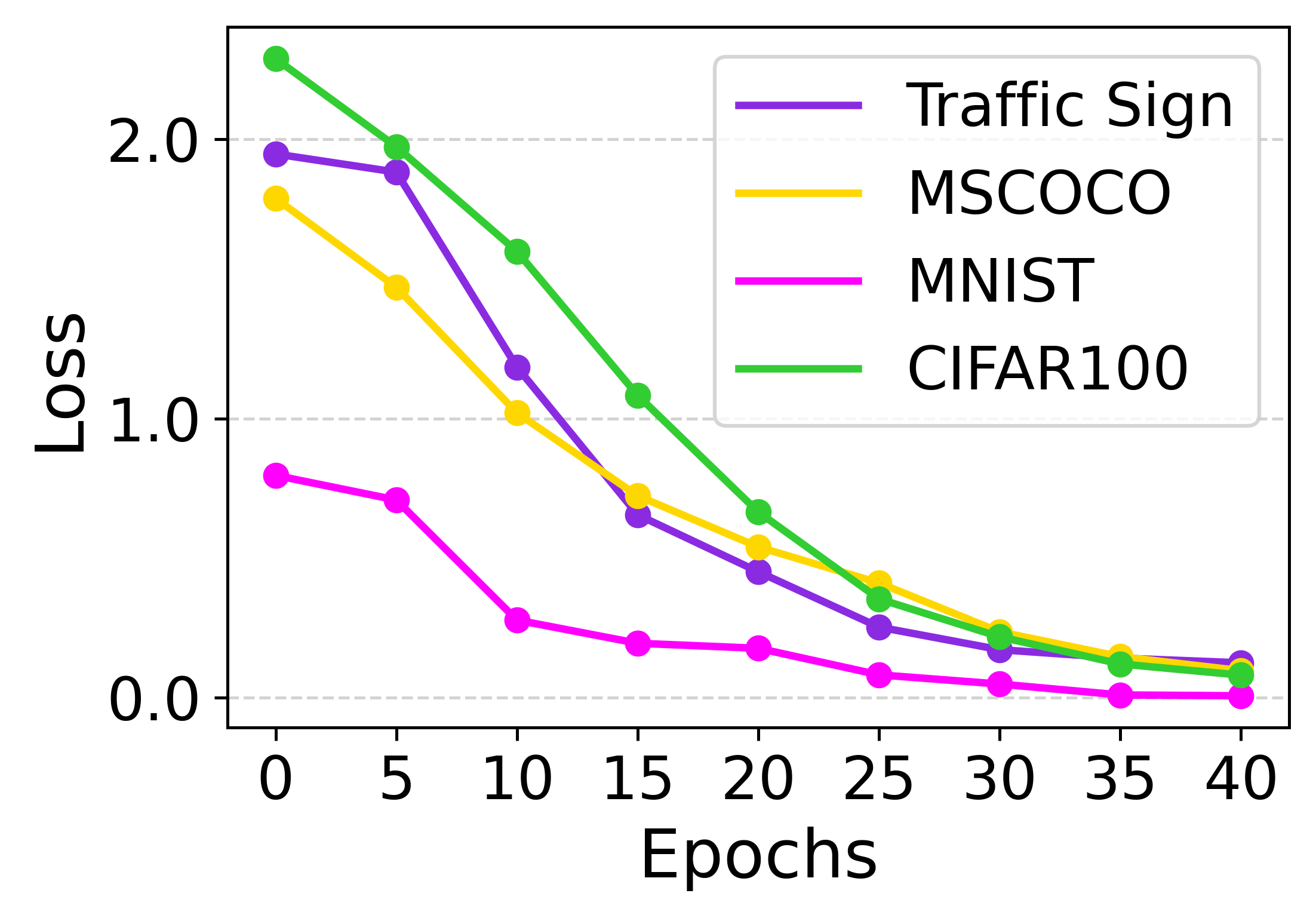}
        \subcaption[]{w/ PD, Unseen}
        \label{subfig:pd_unseen}
    \end{subfigure}
    \caption{Learning curves of PGD with and without the PD constraint across both seen and unseen domains. Further details on the preconditioners used in this figure can be found in Appendix A. 
    }
    \label{fig:learning_curve}
\end{figure}

\paragraph{Effectiveness of Positive Definiteness in Cross-Domain Tasks}
A positive definite preconditioner is known to mitigate the negative effects of pathological loss curvature and accelerate optimization, thereby facilitating convergence~\cite{nocedal1999numerical, saad2003iterative, li2017preconditioned}. This leads to a consistent reduction in the objective function. However, without positive definiteness, this effect is not guaranteed and may result in failure to converge. In Figure~4, we compare the learning curves of PGD with and without a PD constraint across both seen and unseen domains. Without the PD constraint, PGD fails to converge in some of the seen domains and in all the unseen domains. With the PD constraint, PGD successfully converges in all the seen and unseen domains. These results suggest that, in cross-domain tasks, a PD constraint of a preconditioner is crucial for achieving convergence and is beneficial for improving performance, which is also related to Figure~\ref{subfig:motivation2}. 

\paragraph{TSP vs. Previous PGD Methods: Leveraging Multi-Domain Knowledge for Task-Specific Preconditioner}
Compared to previous PGD methods like GAP~\cite{kang2023meta}, TSP is specifically designed for cross-domain few-shot learning~(CDFSL), where unseen domains are not accessed during meta-training. The key challenge in CDFSL is to effectively leverage information from multiple seen domains to quickly adapt to each unseen domain. Previous PGD methods fall short in this regard because they rely on a single preconditioner, even when multiple seen domains are available. For example, GAP uses only one preconditioner to extract information from multiple seen domains, which limits its adaptability to unseen domains with distinct characteristics. In contrast, TSP meta-trains a distinct domain-specific preconditioner~(DSP) for each seen domain and combines them to construct a Task-Specific Preconditioner that better suited to each unseen domain. TSP produces this Task-Specific Preconditioner \emph{effectively}, as shown in Tables~\ref{tab:MDL results} and \ref{tab:SDL results}, and \emph{time-efficiently}, as further detailed in Appendix H. 

\section{Conclusion}
In this study, we have introduced a robust and effective adaptation mechanism called Task-Specific Preconditioned gradient descent~(TSP) to enhance CDFSL performance. Thanks to the meta-trained Domain-Specific Preconditioners~(DSPs) and Task-coefficients, TSP can flexibly adjust the optimization strategy according to the geometric characteristics of the parameter space for the target task. Owing to these components, the proposed TSP demonstrates notable performance improvements on Meta-Dataset across various settings. 
\section{Acknowledgements}
This work was supported by a National Research Foundation of Korea (NRF) grant funded by the Korea government (MSIT) (No. NRF-2020R1A2C2007139) and in part by Institute of Information \& communications Technology Planning \& Evaluation (IITP) grant funded by the Korea government (MSIT) ([NO.RS-2021-II211343, Artificial Intelligence Graduate School Program (Seoul National University)], [No. RS-2023-00235293, Development of autonomous driving big data processing, management, search, and sharing interface technology to provide autonomous driving data according to the purpose of usage]).


\bibliography{aaai25}
\clearpage
\appendix
\twocolumn[{%
 \centering
 \Large \textbf{Appendix for the paper \\ ``Task-Specific Preconditioner for Cross-Domain Few-Shot Learning''}\\[1em]
}]

\setcounter{theorem}{0}
\setcounter{lemma}{0}

\section{The three preconditioners used in Figure~\ref{subfig:motivation2} and Figure~\ref{fig:learning_curve}}
\label{sec:three_type}
To establish the motivation for enforcing the positive definite constraint in CDFSL, we conduct a comparative analysis of three adaptation mechanisms---PGD methods with varying preconditioners---using Meta-Dataset. These mechanisms are applied on the state-of-the-art CDFSL method, TSA~\cite{li2022cross}. 
In these comparisons, with task-specific parameters $\theta$ and a task $\mathcal{T}=\{\mathcal{S}_{\mathcal{T}},\mathcal{Q}_{\mathcal{T}}\}$, we update $\theta$ using PGD with a preconditioner $\mathbf{P}$ as follows:
\begin{equation}
    \theta_{\mathcal{T},t}=\theta_{\mathcal{T}, t-1}-\alpha\cdot\mathbf{P}\nabla_{\theta}\mathcal{L}(\theta_{\mathcal{T}, t-1};\mathcal{S}_{\mathcal{T}}),\;t=1,2,\cdots,
\end{equation}
where $\theta_{\mathcal{T},0}=\theta$ and $\mathcal{L}(\theta_{\mathcal{T},t};\mathcal{S}_{\mathcal{T}})$ is the empirical loss associated with $\mathcal{T}$ and $\theta_{\mathcal{T},t}$. 
The first PGD method is identical to Gradient Descent~(GD), which utilizes the fixed identity matrix $\mathbf{I}$ as $\mathbf{P}$~(i.e., the baseline for gradient descent). 
The second method is Task-Specific Preconditioned Gradient Descent (TSP), which utilizes a Task-Specific Preconditioner designed as $\mathbf{P}^l_k = \mathbf{M}^l_k$ and initialized as $(-1) \cdot \mathbf{I}$~(i.e., PGD without a positive definite constraint). 
The final method is Task-Specific Preconditioned gradient descent~(TSP), which utilizes Task-Specific Preconditioner defined in Section~\ref{subsec:tsp}~(i.e., PGD with a positive definite). 

\section{Meta-Training and Meta-Testing Algorithms}
\label{sec:pseudocode}
\begin{algorithm}
    \caption{Meta-Training for Domain-Specific Preconditioner (DSP)}\label{tsp:alg_dsp}
    \begin{algorithmic}[1]
        \REQUIRE $p(\mathcal{T})$: Task distribution across $K$ train domains
        \REQUIRE $\alpha_{\text{in}}$, $\alpha_{\text{out}}$: The learning rates
        \REQUIRE $\mathcal{L}_{\text{in}}$, $\mathcal{L}_{\text{out}}$: The inner and outer-level loss functions
        \STATE Initialize task-specific parameters \begin{equation*}
            \theta=\{\theta^l\in\mathbb{R}^{m_l\times m_l}\}^L_{l=1}
        \end{equation*}
        \STATE Initialize meta-parameters \begin{equation*}
            \mathcal{M}_1,\cdots,\mathcal{M}_K \text{ where } \mathcal{M}_k=\{\mathbf{M}^l_k\in\mathbb{R}^{m_l\times m_l}\}^L_{l=1}
        \end{equation*}
        \WHILE{not converged}
            \STATE Sample a batch of train tasks $\mathcal{T}_B \sim p(\mathcal{T})$
            \FOR{all $\mathcal{T}=\{\mathcal{S}_{\mathcal{T}},\mathcal{Q}_{\mathcal{T}},d_{\mathcal{T}}\}\in\mathcal{T}_B$}
                \FOR{$l=1$ \textbf{to} $L$}
                    \STATE Compute DSP $\mathbf{P}^l_{d_{\mathcal{T}}}=\mathbf{M}_{d_{\mathcal{T}}}^{l\mathbf{T}}\mathbf{M}^l_{d_{\mathcal{T}}}+\mathbf{I}$
                    \STATE Compute updated task-specific parameters via Eq.~(\ref{eqn:definition_inner_opt})
                \ENDFOR
                \STATE Compute outer-level loss $\mathcal{L}_{\text{out}}(\theta_{\mathcal{T},T};\mathcal{Q}_{\mathcal{T}})$
            \ENDFOR
            \STATE Update the meta-parameters via Eq.~(\ref{eqn:definition_outer_opt})
        \ENDWHILE
    \end{algorithmic}
\end{algorithm}

\begin{algorithm}
    \caption{Meta-Training the dataset classifier for Task-coefficients}\label{tsp:alg_ts}
    \begin{algorithmic}[1]
        \REQUIRE $p(\mathcal{T})$: Task distribution across $K$ train domains
        \REQUIRE $g_{\phi}(\cdot)$: The dataset classifier
        \REQUIRE $\mathcal{L}_{\text{in}}$, $\mathcal{L}_{\text{out}}$: The inner and outer-level loss functions
        \REQUIRE $\lambda$: The regularization parameter
        \REQUIRE $\alpha$: The learning rate
        \STATE Initialize task-specific parameters \begin{equation*}
            \theta=\{\theta^l\in\mathbb{R}^{m_l\times m_l}\}^L_{l=1}
        \end{equation*}
        \STATE Initialize the parameters $\phi$ of $g_{\phi}(\cdot)$
        \WHILE{not converged}
            \STATE Sample a batch of train tasks $\mathcal{T}_B \sim p(\mathcal{T})$
            \FOR{all $\mathcal{T}=\{\mathcal{S}_{\mathcal{T}},\mathcal{Q}_{\mathcal{T}},d_{\mathcal{T}}\}\in\mathcal{T}_B$}
                \STATE Compute logits $(z_{\tau, 1},\cdots,z_{\tau, K})=g(\mathcal{S}_{\tau})$
                \STATE Compute task-coefficients via Eq.~(\ref{eqn:task_coefficients})
                \STATE Compute $\mathcal{L}_{\text{CE}}^{\mathcal{T}}=-\sum^K_{k=1}d_{\mathcal{T},k} \cdot \log(p_{\mathcal{T},k})$
                \STATE With $\theta_{\mathcal{T},T}=\{\theta_{\mathcal{T},T}^l\}^L_{l=1}$ via Eq.~(\ref{eqn:14}), compute 
                \begin{equation*}
                    \mathcal{L}_{\text{Aux}}^{\mathcal{T}}=\mathcal{L}_{\text{out}}(\theta_{\mathcal{T},T};\mathcal{Q}_{\mathcal{T}})
                \end{equation*}
                \STATE Compute $\mathcal{L}^\mathcal{T}=\mathcal{L}_{\text{CE}}^{\mathcal{T}} + \lambda\cdot \mathcal{L}_{\text{Aux}}^{\mathcal{T}}$
            \ENDFOR
            \STATE Compute $\mathcal{L}=\sum_{\mathcal{T}\in\mathcal{T}_B}\mathcal{L}^{\mathcal{T}}$
            \STATE Update the parameters 
            \begin{equation*}
                \phi\leftarrow\phi-\alpha\cdot\nabla_{\phi}\mathcal{L}
            \end{equation*}
        \ENDWHILE
    \end{algorithmic}
\end{algorithm}

\begin{algorithm}
    \caption{Meta-Testing through Task-Specific Preconditioned gradient descent~(TSP)}\label{tsp:alg_tsp}
    \begin{algorithmic}[1]
        \REQUIRE $p(\mathcal{T})$: Task distribution across all domains
        \REQUIRE $\mathcal{M}_1,\cdots,\mathcal{M}_K$: Meta-trained meta-parameters
        \REQUIRE $g_{\phi}(\cdot)$: Meta-trained dataset classifier
        \REQUIRE $\mathcal{L}_{\text{in}}$: The inner-level loss function
        \REQUIRE $\beta$: The learning rate
        \STATE Initialize task-specific parameters \begin{equation*}
            \theta=\{\theta^l\in\mathbb{R}^{m_l\times m_l}\}^L_{l=1}
        \end{equation*}
        \STATE Sample a test task $\mathcal{T}=\{\mathcal{S}_{\mathcal{T}},\mathcal{Q}_{\mathcal{T}}\}$
        \FOR{$l=1$ \textbf{to} $L$}
            \STATE For all $k$, compute DSP $\mathbf{P}^l_k=\mathbf{M}_k^{l\mathbf{T}}\mathbf{M}_k^l+\mathbf{I}$
            \STATE Compute Task-Specific Preconditioner via Eq.~(\ref{eqn:Definition of TSP})
            \STATE Update the task-specific parameters via Eq.~(\ref{eqn: TSP update}) 
        \ENDFOR
    \end{algorithmic}
\end{algorithm}

\section{Proofs of Theorems}
\label{thm:proof}
\begin{lemma}
\label{lemma: positive definiteness of DSP}
For the meta parameter $\mathbf{M} \in \mathbb{R}^{m \times m}$, the Domain-Specific Preconditioner $\mathbf{P}$ defined as $\mathbf{P} = \mathbf{M}^{\mathbf{T}} \mathbf{M} + \mathbf{I}$ is positive definite.
\end{lemma}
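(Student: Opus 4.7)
The plan is to verify the two defining properties of a positive definite matrix in turn: symmetry and strict positivity of the associated quadratic form. Symmetry is immediate, since $\mathbf{P}^{\mathbf{T}} = (\mathbf{M}^{\mathbf{T}}\mathbf{M} + \mathbf{I})^{\mathbf{T}} = \mathbf{M}^{\mathbf{T}}\mathbf{M} + \mathbf{I} = \mathbf{P}$, using that transposition reverses products and that the identity is self-transpose. So the substantive content is the quadratic form inequality.

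For the quadratic form, I would fix an arbitrary nonzero $\mathbf{x} \in \mathbb{R}^m$ and expand
\begin{equation*}
\mathbf{x}^{\mathbf{T}} \mathbf{P} \mathbf{x} \;=\; \mathbf{x}^{\mathbf{T}} \mathbf{M}^{\mathbf{T}} \mathbf{M} \mathbf{x} + \mathbf{x}^{\mathbf{T}} \mathbf{x} \;=\; \|\mathbf{M}\mathbf{x}\|_2^2 + \|\mathbf{x}\|_2^2.
\end{equation*}
The first term is nonnegative as a squared Euclidean norm, and the second term is strictly positive because $\mathbf{x} \neq \mathbf{0}$. Summing, $\mathbf{x}^{\mathbf{T}} \mathbf{P} \mathbf{x} > 0$ for every nonzero $\mathbf{x}$, which together with symmetry gives positive definiteness.

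There is essentially no obstacle here; the one point worth emphasizing is \emph{why the $+\mathbf{I}$ term is needed}. Without it, $\mathbf{M}^{\mathbf{T}}\mathbf{M}$ is only guaranteed to be positive semidefinite (it fails to be definite whenever $\mathbf{M}$ is singular, since then there exists $\mathbf{x} \neq \mathbf{0}$ with $\mathbf{M}\mathbf{x} = \mathbf{0}$). The identity shift contributes the strictly positive $\|\mathbf{x}\|_2^2$ that rules out this degeneracy and upgrades semidefiniteness to definiteness regardless of the rank of $\mathbf{M}$. This is exactly the structural reason the construction in Eq.~(\ref{eqn:definition_of_dsp}) is preferable to the bare Gram matrix design, and it is what Theorem~\ref{theorem: positive definiteness of TSP} will subsequently leverage via a convex combination argument.
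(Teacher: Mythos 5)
Your proof is correct and follows essentially the same route as the paper's: symmetry by transposition, then expanding the quadratic form as $\|\mathbf{M}\mathbf{x}\|^2 + \|\mathbf{x}\|^2$ and noting the second term is strictly positive for $\mathbf{x}\neq\mathbf{0}$. The closing remark on why the identity shift is needed is a nice addition but does not change the argument.
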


\begin{proof}
$\mathbf{P}$ is symmetric, as shown below:
\begin{equation*} 
\mathbf{P}^\mathbf{T} = (\mathbf{M}^\mathbf{T} \mathbf{M} + \mathbf{I})^\mathbf{T} = (\mathbf{M}^\mathbf{T} \mathbf{M})^\mathbf{T} + \mathbf{I}^\mathbf{T} = \mathbf{M}^\mathbf{T} \mathbf{M} + \mathbf{I} = \mathbf{P}.
\end{equation*}
For $\forall \, \mathbf{x} \in \mathbb{R}^m \backslash \{\mathbf{0}\}$, 
\begin{align*} 
\mathbf{x}^\mathbf{T} \mathbf{P} \mathbf{x} &= \mathbf{x}^\mathbf{T} (\mathbf{M}^\mathbf{T} \mathbf{M} + \mathbf{I}) \mathbf{x} \\ 
&= \mathbf{x}^\mathbf{T} \mathbf{M}^\mathbf{T} \mathbf{M} \mathbf{x} + \mathbf{x}^\mathbf{T} \mathbf{I} \mathbf{x} \\
&= (\mathbf{M} \mathbf{x})^\mathbf{T} (\mathbf{M} \mathbf{x}) + \mathbf{x}^\mathbf{T} \mathbf{x} \\
&= \| \mathbf{M} \mathbf{x} \|^2 + \| \mathbf{x} \|^2.
\end{align*}
The first term on the right-hand side is non-negative, while the second term is positive:
\begin{align*}
&\| \mathbf{M} \mathbf{x} \|^2 \geq 0, \\
&\| \mathbf{x} \|^2 > 0
\end{align*}
since $\mathbf{x} \neq \mathbf{0}$. Thus, we conclude:
\begin{equation}
\label{eqn:quadratic form of DSP is positive}
\mathbf{x}^\mathbf{T} \mathbf{P} \mathbf{x} > 0,
\end{equation}
which confirms the positive-definiteness of the Domain-Specific Preconditioner $\mathbf{P}$.
\end{proof}

\begin{theorem}
    \label{theorem: positive definiteness of TSP - in appendix}
    Let $ p_k \in [0, 1], k=1,\cdots,K$, be the task-coefficients satisfying $\sum^K_{k=1}p_k=1$. For the Domain-Specific Preconditioners $\mathbf{P}_k \in \mathbb{R}^{m \times m}, k=1,\cdots,K $, Task-Specific Preconditioner $\mathbf{P}$ defined as $\mathbf{P} = \sum^K_{k=1}p_k \cdot \mathbf{P}_k$ is positive definite. 
\end{theorem}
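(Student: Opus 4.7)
The statement reduces to the general fact that a convex combination of positive definite matrices is positive definite, and the bulk of the work has already been done in Lemma~\ref{lemma: positive definiteness of DSP}. The plan is first to invoke Lemma~\ref{lemma: positive definiteness of DSP} to conclude that each Domain-Specific Preconditioner $\mathbf{P}_k = \mathbf{M}_k^\mathbf{T} \mathbf{M}_k + \mathbf{I}$ is positive definite, and then to check symmetry and strict positivity of the associated quadratic form of $\mathbf{P} = \sum_{k=1}^K p_k \cdot \mathbf{P}_k$.

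For symmetry, I would simply note that $\mathbf{P}^\mathbf{T} = \sum_{k=1}^K p_k \cdot \mathbf{P}_k^\mathbf{T} = \sum_{k=1}^K p_k \cdot \mathbf{P}_k = \mathbf{P}$, using symmetry of each $\mathbf{P}_k$ (already shown in the proof of Lemma~\ref{lemma: positive definiteness of DSP}). For positivity, I would take an arbitrary $\mathbf{x} \in \mathbb{R}^m \setminus \{\mathbf{0}\}$ and expand
\begin{equation*}
\mathbf{x}^\mathbf{T} \mathbf{P} \mathbf{x} \;=\; \sum_{k=1}^K p_k \cdot \mathbf{x}^\mathbf{T} \mathbf{P}_k \mathbf{x}.
\end{equation*}
By Eq.~(\ref{eqn:quadratic form of DSP is positive}) in the proof of Lemma~\ref{lemma: positive definiteness of DSP}, each $\mathbf{x}^\mathbf{T} \mathbf{P}_k \mathbf{x} > 0$, and by hypothesis each $p_k \geq 0$ with $\sum_k p_k = 1$, so at least one $p_k$ is strictly positive. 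Therefore the sum is strictly positive, which is what we need.

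The only subtlety worth flagging is the degenerate-coefficient case: some of the $p_k$ may equal zero, so one cannot directly claim every summand is positive. This is handled by the constraint $\sum_k p_k = 1$, which forces at least one index $k^\star$ with $p_{k^\star} > 0$, making $p_{k^\star} \cdot \mathbf{x}^\mathbf{T} \mathbf{P}_{k^\star} \mathbf{x} > 0$ while the remaining terms are nonnegative. I do not anticipate any real obstacle; the argument is a short two- or three-line computation once Lemma~\ref{lemma: positive definiteness of DSP} is in hand.
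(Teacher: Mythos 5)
Your proposal is correct and follows essentially the same route as the paper's own proof in Appendix C: symmetry from the symmetry of each $\mathbf{P}_k$, then strict positivity of $\mathbf{x}^\mathbf{T}\mathbf{P}\mathbf{x}$ via nonnegativity of each summand plus the existence of at least one strictly positive $p_k$ guaranteed by $\sum_k p_k = 1$. The subtlety you flag about degenerate coefficients is exactly the point the paper also addresses, so there is nothing to add.
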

\begin{proof}
    By Lemma~\ref{lemma: positive definiteness of DSP}, $\mathbf{P}_k$ is symmetric. Therefore, $\mathbf{P}$ is symmetric, as shown below:
    \begin{equation*} 
        \mathbf{P}^\mathbf{T} = \left(\sum^K_{k=1}p_k \cdot \mathbf{P}_k\right)^\mathbf{T} = \sum^K_{k=1}p_k \cdot \mathbf{P}_k^\mathbf{T} = \sum^K_{k=1}p_k \cdot \mathbf{P}_k = \mathbf{P}.
    \end{equation*}
    For $\forall \, \mathbf{x} \in \mathbb{R}^m \backslash \{\mathbf{0}\}$, 
    \begin{align*} 
        \mathbf{x}^\mathbf{T} \mathbf{P} \mathbf{x}
        &= \mathbf{x}^\mathbf{T} \left(\sum^K_{k=1}p_k \cdot \mathbf{P}_k\right) \mathbf{x} \\
        &= \sum^K_{k=1}p_k \cdot \mathbf{x}^\mathbf{T} \mathbf{P}_k \mathbf{x}.
    \end{align*}
    Since $\mathbf{P}_k$ is the Domain-Specific Preconditioner, by Lemma~\ref{lemma: positive definiteness of DSP}, each summand on the right-hand side is non-negative (see Eq.~(\ref{eqn:quadratic form of DSP is positive})):
    \begin{equation}
    \label{eqn:summand in TSP}
        p_k \cdot \mathbf{x}^\mathbf{T} \mathbf{P}_k \mathbf{x} \geq 0,\;\; k=1,\cdots,K
    \end{equation}
    because $0 \leq p_k \leq 1$. Since $\sum^K_{k=1}p_k = 1$ and $p_k \in [0, 1], k=1,\cdots,K$, there exists at least one $p_k$ such that $p_k > 0$, implying that at least one term in Eq.~(\ref{eqn:summand in TSP}) is positive: 
    \begin{equation}
    \label{eqn:positive summand in TSP}
        \exists \, k \in \{ 1, 2, 3, \dots, K\} \:\; \text{such that} \:\; p_k \cdot \mathbf{x}^\mathbf{T} \mathbf{P}_k \mathbf{x} > 0.
    \end{equation}
    Combining Eq.~(\ref{eqn:summand in TSP}) and Eq.~(\ref{eqn:positive summand in TSP}), we conclude:
    \begin{equation*}
        \mathbf{x}^\mathbf{T} \mathbf{P} \mathbf{x} = \sum^K_{k=1}p_k \cdot \mathbf{x}^\mathbf{T} \mathbf{P}_k \mathbf{x} > 0,
    \end{equation*}
    which confirms the positive-definiteness of the Task-Specific Preconditioner $\mathbf{P}$.
\end{proof}

\section{Additional Ablation Studies}
\label{sec:additional_ablation_studies}
\subsection{Weighting Factor $\lambda$ of the Dataset Classifier Loss} 
\label{sec:weighting_factor}
In Table~\ref{tab:Ablation-training loss for dataset classifier}, we compare different dataset classifier losses by adjusting the weighting factor $\lambda$ in Eq.~(\ref{eqn:loss_for_dataset_classifier}). Additionally, we include the results obtained when utilizing the auxiliary loss in Eq.~(\ref{eqn:loss_for_dataset_classifier}) as the dataset classifier loss. From the results, we can observe that $\lambda\:\!=\:\!0.1$ yields the optimal performance, while using other losses results in inferior performance in both seen and unseen domains. This performance gap is more pronounced in unseen domains, highlighting the importance of balancing between the two losses for generalization to unseen domains. Based on these findings, we adopt $\lambda\:\!=\:\!0.1$ for the dataset classifier loss in all experiments presented in this manuscript.
\begin{table}[t]
\small
\setlength{\tabcolsep}{1.0mm}
\centering
\caption{Mean accuracy ($\%$) for different values of $\lambda$ in the dataset classifier loss (See Eq.~(\ref{eqn:loss_for_dataset_classifier})).}
\label{tab:Ablation-training loss for dataset classifier}
\begin{tabular}{cccccccccc}
    \hline
        & $\lambda=10$ & $\lambda=1$ & \begin{tabular}[c]{@{}c@{}}$\lambda=0.1$\\ (Ours)\end{tabular} & $\lambda=0.01$ & \begin{tabular}[c]{@{}c@{}}Only\\$\mathcal{L}_{\text{CE}}$\end{tabular} & \begin{tabular}[c]{@{}c@{}}Only\\$\mathcal{L}_{\text{Aux}}$\end{tabular}\\ \hline
    Avg Seen   & 80.8 & 81.3 & \textbf{81.6} & 81.0 & 80.9 & 80.1 \\
    Avg Unseen & 79.3 & 79.7 & \textbf{79.8} & 79.1 & 78.8 & 78.2 \\
    Avg All    & 80.2 & 80.7 & \textbf{80.9} & 80.3 & 80.1 & 79.4 \\ \hline
\end{tabular}
\end{table}

\subsection{Interpreting and visualizing task-coefficient}
To better understand how DSPs from the eight training domains combine to form Task-Specific Preconditioner, we illustrate the task-coefficients of TSP used in various test tasks in Figure~\ref{fig:task_coeff}. We first randomly sample the test tasks from each of the four domains (ImageNet, Birds, Traffic Sign, and MSCOCO). 
The blue heatmaps illustrate the task-coefficient values utilized in each test task. 
These heatmaps exhibit consistent patterns within each domain, although the values vary across tasks. For instance, in the Birds domain, all 5 tasks primarily rely on DSPs from ImageNet and Birds. Meanwhile, Task 2 evenly distributes task-coefficient values between them, while Task 4 assigns significantly more values to the Birds DSP compared to the ImageNet counterpart. 
In Figure~\ref{fig:task_coeff_extra1}, Figure~\ref{fig:task_coeff_extra2}, and Figure~\ref{fig:task_coeff_extra3}, we randomly sample the test tasks from other domains. 
Similar to the patterns observed in the heatmaps presented in Figure~\ref{fig:task_coeff}, these heatmaps also demonstrate consistent patterns within each domain, although the values vary across tasks. 

\begin{figure*}[t]
    \begin{subfigure}[t]{0.5\textwidth}
        \includegraphics[width=1.0\textwidth]{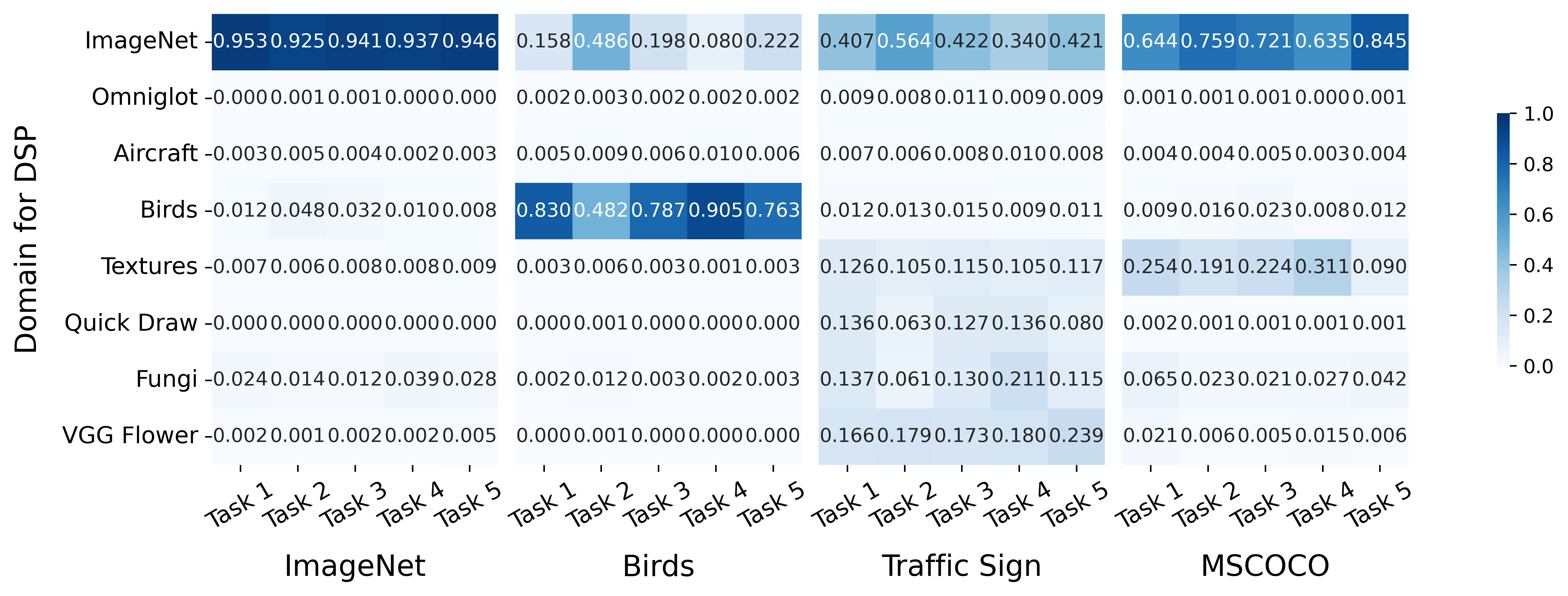}
        \subcaption{}
        \label{fig:task_coeff}
    \end{subfigure}
    \begin{subfigure}[t]{0.5\textwidth}
        \includegraphics[width=1.0\textwidth]{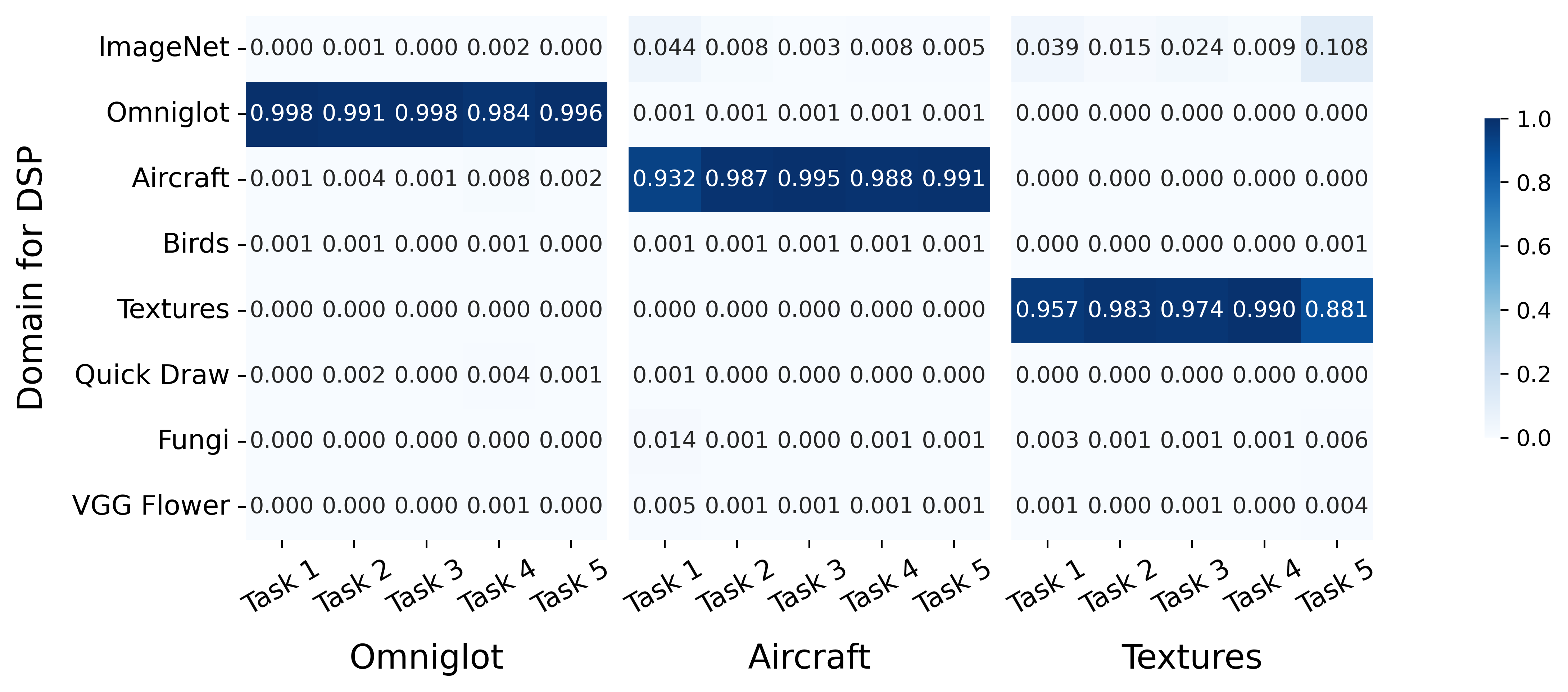}
        \subcaption{}
        \label{fig:task_coeff_extra1}
    \end{subfigure}
    
    \begin{subfigure}[t]{0.5\textwidth}
        \includegraphics[width=1.0\textwidth]{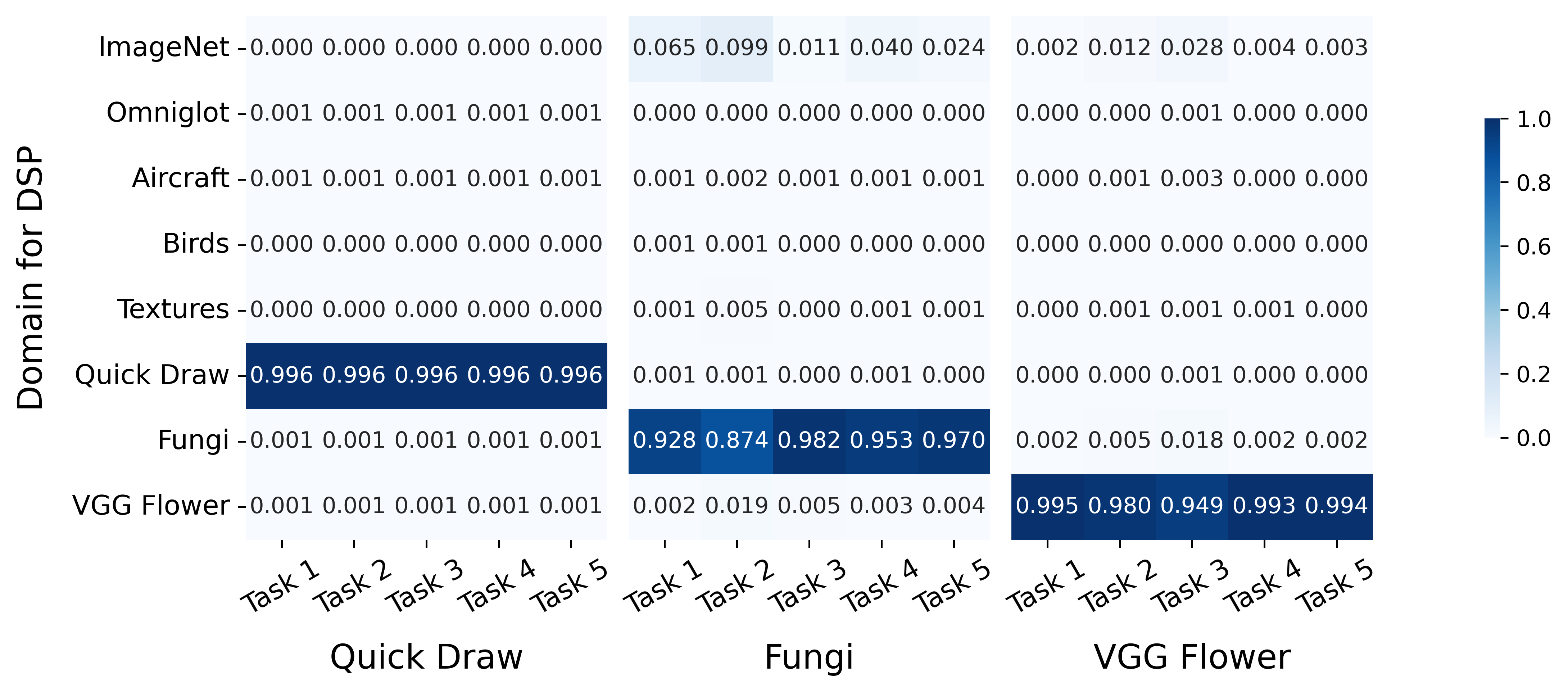}
        \subcaption{}
        \label{fig:task_coeff_extra2}
    \end{subfigure}
    \begin{subfigure}[t]{0.5\textwidth}
        \includegraphics[width=1.0\textwidth]{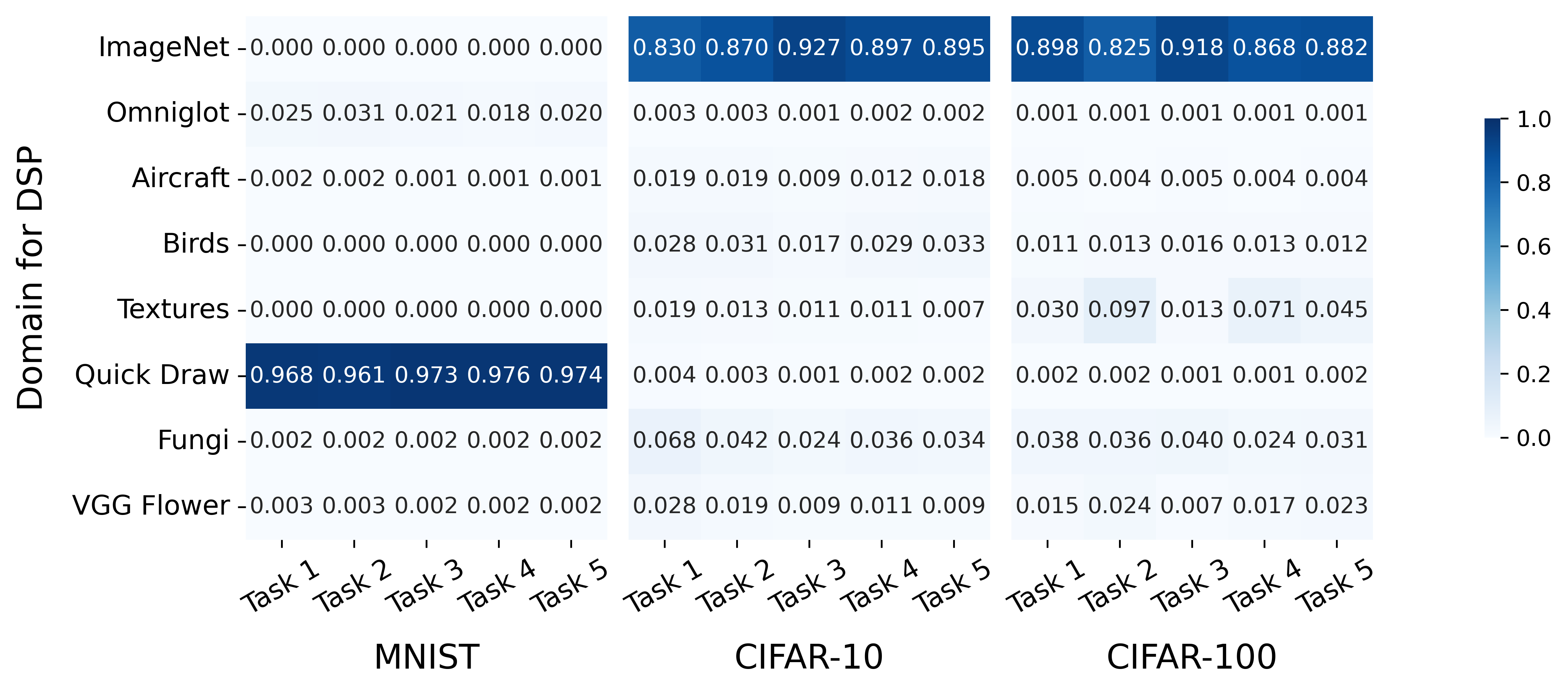}
        \subcaption{}
        \label{fig:task_coeff_extra3}
    \end{subfigure}
    \caption{Task-coefficient values used in the construction of Task-Specific Preconditioner. Columns represent DSPs trained on one of the eight training domains of Meta-Dataset. (a) Rows represent five test tasks randomly sampled from each of the four domains: 2 seen domains (ImageNet, Birds) and 2 unseen domains (Traffic Sign, and MSCOCO). (b) Rows represent five test tasks randomly sampled from each of the three seen domains: Omniglot, Aricraft, and Textures. (c) Rows represent five test tasks randomly sampled from each of the three seen domains: Quick Draw, Fungi, and VGG Flower. (d) Rows represent five test tasks randomly sampled from each of the three unseen domains: MNIST, CIFAR-10, and CIFAR-100.}
    \label{fig:task_coeff_full}
\end{figure*}

\section{Implementation Details}
\label{sec:imple_details}
\subsection{Dataset}
Meta-Dataset~\cite{triantafillou2019meta} is the standard benchmark for evaluating the performance of cross-domain few-shot classification. Initially, it comprised ten datasets, including ILSVRC 2012~\cite{russakovsky2015imagenet}, Omniglot~\cite{lake2015human}, FGVC-Aircraft~\cite{maji2013fine}, CUB-200-2011~\cite{wah2011caltech}, Describable Textures~\cite{cimpoi2014describing}, QuickDraw~\cite{ha2017neural}, FGVCx Fungi~\cite{schroeder2018fgvcx}, VGG Flower~\cite{nilsback2008automated}, Traffic Signs~\cite{houben2013detection}, and MSCOCO~\cite{lin2014microsoft}. Later, it was further expanded to include MNIST~\cite{lecun1998gradient}, CIFAR-10~\cite{krizhevsky2009learning}, and CIFAR-100~\cite{krizhevsky2009learning}. 

\subsection{Architecture for the dataset classifier}
\label{sec:architecture}
As the dataset classifier, we use a permutation-invariant set encoder $g$~\cite{zaheer2017deep} followed by a linear layer. We adopt the implementation of a permutation-invariant set encoder as described in previous studies~\cite{requeima2019fast,triantafillou2021learning}. We implement this encoder as Conv-$5$ backbone, comprising $5$ modules with $3\times3$ convolutions employing $256$ filters, followed by batch normalization, ReLU activation, and $2\times2$ max-pooling with a stride of $2$. Subsequently, global average pooling is applied to the output, followed by averaging over the first dimension (representing different examples within the support set), resulting in the set representation of the given support set. This representation is then fed into a linear layer to classify the given support set into one of the $K$-training datasets. 

\subsection{Hyper-parameters}
\label{sec:hyperparameters}
For all the experiments, we use the hyper-parameters in Table~\ref{tab:dsp_hyperparameters} and Table~\ref{tab:dataset_classifier_hyperparameters}.  

\begin{table*}
\centering
\setlength{\tabcolsep}{0.5mm}
\small
\caption{Hyper-parameters used for training DSP on various experimental settings. For the test learning rates, the first value corresponds to the learning rate for the Residual Adapter, while the second value corresponds to the learning rate for the pre-classifier transformation.}
\begin{tabular}{l|c|c|c|c}
    \hline        \multirow{2}{*}{Setting} & Varying-Way   & Varying-Way & 5-Way      & Varying-Way\\
                    & Varying-Shot  & 5-Shot      & 1-Shot     & Varying-Shot\\
    \hline
    Batch size & \multicolumn{3}{c|}{16} & 16\\
    Weight decay & \multicolumn{3}{c|}{0.0007} & 0.0007\\
    T-max & \multicolumn{3}{c|}{2500} & 2500\\
    Max iteration & \multicolumn{3}{c|}{160000} & 40000\\
    Initialization for $\mathbf{M}$ & \multicolumn{3}{c|}{$0.1\cdot\mathbf{I}$} & $0.1\cdot\mathbf{I}$\\
    Inner learning rate $\alpha_{\text{in}}$ & \multicolumn{3}{c|}{0.1} & 0.1\\
    Outer learning rate $\alpha_{\text{out}}$ & \multicolumn{3}{c|}{0.1} & 0.1\\
    The number of training inner-step & \multicolumn{3}{c|}{5} &  5\\
    The number of testing inner-step & \multicolumn{3}{c|}{40} &  40\\
    \hline
    Test learning rate $\beta$ for seen domain & (0.05, 0.30) & (0.05, 0.30) & (0.05, 0.30) & (0.05, 0.20)\\
    Test learning rate $\beta$ for unseen domain & (0.25, 0.05) & (0.25, 0.05) & (0.25, 0.05) & (0.25, 0.05) \\
    \hline
\end{tabular}
\label{tab:dsp_hyperparameters}
\end{table*}

\begin{table}
\centering
\setlength{\tabcolsep}{3.0mm}
\small
    \caption{Hyper-parameters used for training Dataset Classifier on various experimental settings. }
    \begin{tabular}{l|c}
         \hline
         Hyper-parameter \\
         \hline
         Batch size & 16 \\
         Weight decay & 0.0007 \\
         T-max & 500 \\
         Max iteration & 4000 \\
         Learning rate & 0.001 \\
         \hline
    \end{tabular}
    \label{tab:dataset_classifier_hyperparameters}
\end{table}

\section{Additional Results}
\label{sec:additional_results}
\subsection{Varying-Way Five-Shot setting}
In the standard Meta-Dataset benchmark, tasks vary in the number of classes per task (`way') and the number of support images per class (`shot'), with `shot' ranging up to 100. Here, we evaluate TSP in Varying-Way Five-Shot setting, which poses a greater challenge due to the limited number of support images. The results in Table~\ref{tab:Varying-Way Five-Shot results} show that TSP\textsuperscript{\dag\dag} achieves top performance for 10 out of 13 datasets, including all 5 unseen datasets. Notably, TSP\textsuperscript{\dag\dag} demonstrates significantly higher scores in unseen domains compared to the previous best result ($+2.2\%$).

\subsection{Five-Way One-Shot setting}
We evaluate TSP under a more challenging setting, where only a single support image per class is available. 
As shown in Table~\ref{tab:Five_Way_One_Shot_results}, TSP\textsuperscript{\dag\dag} consistently outperforms the previous best results for 10 out of 13 datasets, while TSP\textsuperscript{\dag} also achieves the best or near-best results. 
Notably, applying TSP to TA$^2$-Net results in a significant performance improvement in unseen domains ($+2.8\%$) compared to using TA$^2$-Net alone, demonstrating its efficacy in this highly challenging setting.

\begin{table}
\centering
\setlength{\tabcolsep}{2.0mm}
\small
\caption{Comparision of two DSP designs with and without the identity matrix. The DSP design $\mathbf{M^TM}$ does not guarantee positive definiteness.}
\label{tab:Additional_results_for_matrix}
\begin{tabular}{lcc|cc}
    \hline
                   Setting & \multicolumn{2}{c}{\begin{tabular}[c]{@{}c@{}}Varying-Way \\ Varying-Shot\end{tabular}} & \multicolumn{2}{|c}{\begin{tabular}[c]{@{}c@{}}Varying-Way \\ Five-Shot\end{tabular}}\\ \hline
                   DSP designs & $\mathbf{M^TM}$ & $\mathbf{M^TM+I}$ & $\mathbf{M^TM}$ & $\mathbf{M^TM+I}$\\ \hline
    Avg Seen   & 80.2 & \textbf{81.6} & 77.0 & \textbf{77.9} \\
    Avg Unseen & 68.2 & \textbf{79.8} & 63.3 & \textbf{73.7} \\
    Avg All    & 75.6 & \textbf{80.9} & 71.7 & \textbf{76.3} \\ \hline
\end{tabular}
\end{table}

\section{Comparison between $\mathbf{M^TM+I}$ and $\mathbf{M^TM}$: Failure of DSP design $\mathbf{M^TM}$}
\label{sec:Additional_results_for_matrix}
In Section~\ref{sec:discussion}, we compared $\mathbf{LL^T+I}$ and $\mathbf{LL^T}$ to explore the impact of including the identity matrix in the DSP design. Extending this analysis, we now include two additional DSP designs, $\mathbf{M^TM+I}$ and $\mathbf{M^TM}$, for comprehensive examination, despite the latter's failure to meet positive definiteness. Our findings, presented in Table~\ref{tab:Additional_results_for_matrix}, demonstrate that the DSP design $\mathbf{M^TM+I}$ consistently outperforms $\mathbf{M^TM}$ in both Varying-Way Varying-Shot and Varying-Way Five-Shot settings. This contrasts with the results in Table~\ref{tab:precondition with identity matrix} of the manuscript, where including the identity matrix was effective in Varying-Way Varying-Shot setting. Furthermore, $\mathbf{M^TM}$ displays significantly lower performance compared to TSA~\cite{li2022cross}, which serves as the baseline for our method. 

To investigate the failure of the DSP design $\mathbf{M^TM}$, we compare the effective ranks~\cite{roy2007effective} of Task-Specific Preconditioners between two DSP designs, $\mathbf{M^TM}$ and $\mathbf{LL^T}$. The effective rank provides a numerical approximation of a matrix's rank, indicating the number of singular values distant from zero. Since positive definite matrices possess solely positive singular values, an effective rank significantly lower than the full rank implies that the preconditioners are far from positive definite. Table~\ref{tab:Additional_results_for_matrix_rank1} and Table~\ref{tab:Additional_results_for_matrix_rank2} present the averaged effective ranks for 17 Task-Specific Preconditioners of the DSP design $\mathbf{M^TM}$, differing only in settings: Varying-Way Varying-Shot and Varying-Way Five-Shot, respectively. Several preconditioners in these tables exhibit notably lower effective rank than the full rank, indicating their departure from positive definiteness. Consequently, these non-positive definite preconditioners may fail to determine the steepest descent direction in the parameter space, leading to the degraded performance observed in Table~\ref{tab:Additional_results_for_matrix}. In contrast, \ref{tab:Additional_results_for_matrix_rank3} and Table~\ref{tab:Additional_results_for_matrix_rank4} reveal that the averaged effective rank of 17 Task-Specific Preconditioners of the DSP design $\mathbf{LL^T}$ closely approach full rank. This finding aligns with the Cholesky factorization's assertion~\cite{horn2012matrix} that $\mathbf{LL^T}$ is positive definite, which confirms that Task-Specific Preconditioners constructed with $\mathbf{LL^T}$ are positive definite (Theorem~\ref{theorem: positive definiteness of TSP} holds as long as DSP $\mathbf{P}_k$ satisfies the positive definiteness). This observation corroborates the results presented in Table~\ref{tab:precondition with identity matrix} of the manuscript.


\section{Time-efficiency of TSP compared to GAP}
\label{sec:time-efficiency of TSP compared to GAP}
Unlike GAP~\cite{kang2023meta}, which suffers from significant inference time due to time-intensive singular value decomposition~(SVD) calculations at every neural network layer during each inner-level training iteration, TSP achieves a much faster inference time. Specifically, GAP requires approximately 14.2 seconds per task, while TSP applied on TSA~\cite{li2022cross} completes inference in just 1.1 seconds--about 13 times faster than GAP--by leveraging pre-calculated DSPs and a dataset classifier to avoid time-intensive calculations during inference.\footnote{Inference time for GAP and TSP is measured on a single RTX3090 GPU and averaged over 100 test tasks.} This highlights the superior practical efficiency of TSP compared to the previous PGD-based method, GAP.

\section{Application Details for TSP\textsuperscript{\dag} and TSP\textsuperscript{\dag\dag}}
As shown Figure~\ref{fig:TSP_TA2Net}, we apply TSP to the state-of-the-art CDFSL methods, TSA~\cite{li2022cross} and TA$^2$-Net~\cite{guo2023task}. 
Figure~\ref{subfig:TSP_TSA_train} illustrates PGD with Domain-Specific Preconditioner~(DSP) applied to TSA during meta-training, where the DSP is selected based on the task's domain label, and PGD optimizes each task-specific parameter $\theta^l$ using the corresponding DSP. 
Figure~\ref{subfig:TSP_TSA_test} shows PGD with a Task-Specific Preconditioner applied to TSA during meta-testing. In this case, each preconditioner is constructed by DSPs with task coefficients generated by the Dataset Classifier, and PGD optimizes each $\theta^l$ using the constructed preconditioner. 
Figure~\ref{subfig:TSP_TA2Net_train} depicts PGD with DSP applied to TA$^2$-Net during meta-training. Unlike TSA, which optimizes a single task-specific parameter for each module, TA$^2$-Net optimizes multiple task-specific parameters $\theta^{l,j}$. To accommodate this, we apply the same PGD to optimize the multiple parameters $\theta^{l,j}$. 
Figure~\ref{subfig:TSP_TA2Net_test} shows PGD with Task-Specific Preconditoner applied to TA$^2$-Net during meta-testing, where, similar to the meta-training phase, the same PGD is applied to optimize the multiple task-specific parameters $\theta^{l,j}$. 

\begin{table*}
\setlength{\tabcolsep}{2.0mm}
\small
\centering
\caption{Comparison to state-of-the-art methods in Varying-Way Five-Shot setting. Mean accuracy and 95$\%$ confidence interval are reported. The best results are highlighted in \textbf{bold}. TSP\textsuperscript{\dag} denotes TSP applied on TSA. TSP\textsuperscript{\dag\dag} denotes TSP applied on TA$^2$-Net.}
\label{tab:Varying-Way Five-Shot results}
\begin{tabular}{cccccccccc}
    \hline
    Test   Dataset   & \begin{tabular}[c]{@{}c@{}}Simple \\ CNAPS\end{tabular} & SUR & URT & URL & TSA& TA$^2$-Net & MOKD & TSP\textsuperscript{\dag}  & TSP\textsuperscript{\dag\dag} \\ \hline
    ImageNet         & 47.2$\pm$1.0 & 46.7$\pm$1.0 & 48.6$\pm$1.0 & 49.4$\pm$1.0 & 48.3$\pm$1.0 & 49.3$\pm$1.0 & 47.5$\pm$1.0 & 50.6$\pm$1.0 & \textbf{50.8$\pm$1.0} \\
    Omniglot         & 95.1$\pm$0.3 & 95.8$\pm$0.3 & 96.0$\pm$0.3 & 96.0$\pm$0.3 & 96.8$\pm$0.3 & 96.6$\pm$0.2 & 96.0$\pm$0.3 & \textbf{97.2$\pm$0.3} & 97.1$\pm$0.3 \\
    Aircraft         & 74.6$\pm$0.6 & 82.1$\pm$0.6 & 81.2$\pm$0.6 & 84.8$\pm$0.5 & 85.5$\pm$0.5 & 85.9$\pm$0.4 & 84.4$\pm$0.5 & 86.2$\pm$0.5 & \textbf{86.7$\pm$0.5} \\
    Birds            & 69.6$\pm$0.7 & 62.8$\pm$0.9 & 71.2$\pm$0.7 & 76.0$\pm$0.6 & 76.6$\pm$0.6 & 77.3$\pm$0.6 & 76.8$\pm$0.6 & 77.0$\pm$0.6 & \textbf{77.8$\pm$0.6} \\
    Textures         & 57.5$\pm$0.7 & 60.2$\pm$0.7 & 65.2$\pm$0.7 & 69.1$\pm$0.6 & 68.3$\pm$0.7 & 68.3$\pm$0.6 & 66.3$\pm$0.6 & 69.1$\pm$0.6 & \textbf{69.3$\pm$0.6}  \\
    Quick   Draw     & 70.9$\pm$0.6 & 79.0$\pm$0.5 & \textbf{79.2$\pm$0.5} & 78.2$\pm$0.5 & 77.9$\pm$0.6 & 78.5$\pm$0.5 & 78.9$\pm$0.5 & 78.7$\pm$0.6 & 78.8$\pm$0.6  \\
    Fungi            & 50.3$\pm$1.0 & 66.5$\pm$0.8 & 66.9$\pm$0.9 & 70.0$\pm$0.8 & 70.4$\pm$0.8 & 70.3$\pm$0.8 & 68.8$\pm$0.9 & \textbf{73.6$\pm$0.9}\; & 72.9$\pm$0.8  \\
    VGG   Flower     & 86.5$\pm$0.4 & 76.9$\pm$0.6 & 82.4$\pm$0.5 & 89.3$\pm$0.4 & 89.5$\pm$0.4 & 90.0$\pm$0.4 & 89.1$\pm$0.4 & 90.8$\pm$0.4\; & \textbf{91.1$\pm$0.4}  \\ \hline
    Traffic   Sign   & 55.2$\pm$0.8 & 44.9$\pm$0.9 & 45.1$\pm$0.9 & 57.5$\pm$0.8 & 72.3$\pm$0.6 & 76.7$\pm$0.5 & 59.2$\pm$0.8 & 79.6$\pm$0.5 & \textbf{80.9$\pm$0.5}  \\
    MSCOCO           & 49.2$\pm$0.8 & 48.1$\pm$0.9 & 52.3$\pm$0.9 & 56.1$\pm$0.8 & 56.0$\pm$0.8 & 56.0$\pm$0.8 & 51.8$\pm$0.8 & 57.5$\pm$0.9 & \textbf{59.3$\pm$0.8} \\
    MNIST            & 88.9$\pm$0.4 & 90.1$\pm$0.4 & 86.5$\pm$0.5 & 89.7$\pm$0.4 & 92.5$\pm$0.4 & 93.3$\pm$0.3 & 89.4$\pm$0.3 & 93.0$\pm$0.3 & \textbf{93.9$\pm$0.3}  \\
    CIFAR-10         & 66.1$\pm$0.7 & 50.3$\pm$1.0 & 61.4$\pm$0.7 & 66.0$\pm$0.7 & 72.0$\pm$0.7 & 73.1$\pm$0.7 & 58.8$\pm$0.7 & 73.5$\pm$0.7 & \textbf{74.2$\pm$0.7} \\
    CIFAR-100        & 53.8$\pm$0.9 & 46.4$\pm$0.9 & 52.5$\pm$0.9 & 57.0$\pm$0.9 & 64.1$\pm$0.8 & 64.1$\pm$0.8 & 55.3$\pm$0.9 & 65.0$\pm$0.8\; & \textbf{65.6$\pm$0.9} \\ \hline
    Avg   Seen   & 69.0 & 71.3 & 73.8 & 76.6 & 76.7 & 77.0 & 76.0 & 77.9 & \textbf{78.1}\\
    Avg   Unseen & 62.6 & 56.0 & 59.6 & 65.3 & 71.4 & 72.6 & 63.0 & 73.7 & \textbf{74.8}\\
    Avg   All    & 66.5 & 65.4 & 68.3 & 72.2 & 74.6 & 75.3 & 71.0 & 76.3 & \textbf{76.8}\\ \hline
    Avg   Rank   & 7.9  & 7.8  & 6.6  & 4.9  & 4.3  & 3.5  & 5.8 & 2.2 & \textbf{1.4}\\ \hline
\end{tabular}
\end{table*}

\begin{table*}[t]
\centering
\caption{Comparison to state-of-the-art methods in Five-Way One-Shot setting. Mean accuracy and 95$\%$ confidence interval are reported. The best results are highlighted in \textbf{bold}. TSP\textsuperscript{\dag} denotes TSP applied on TSA. TSP\textsuperscript{\dag\dag} denotes TSP applied on TA$^2$-Net.}
\label{tab:Five_Way_One_Shot_results}
\setlength{\tabcolsep}{2.0mm}
\small
\begin{tabular}{cccccccccc}
    \hline
    Test Dataset     & \begin{tabular}[c]{@{}c@{}}Simple \\ CNAPS\end{tabular} & SUR & URT & URL & TSA & TA$^2$-Net & MOKD & TSP\textsuperscript{\dag}  & TSP\textsuperscript{\dag\dag} \\ 
    \hline
    ImageNet         & 42.6$\pm$0.9 & 40.7$\pm$1.0 & 47.4$\pm$1.0 & 49.6$\pm$1.1 & 48.0$\pm$1.0 & 48.8$\pm$1.1 & 46.0$\pm$1.0 & 50.1$\pm$1.0 & \textbf{50.5$\pm$1.0} \\
    Omniglot         & 93.1$\pm$0.5 & 93.0$\pm$0.7 & 95.6$\pm$0.5 & 95.8$\pm$0.5 & 96.3$\pm$0.4 & 95.7$\pm$0.4 & 95.5$\pm$0.5 & 96.6$\pm$0.4 & \textbf{96.8$\pm$0.4} \\
    Aircraft         & 65.8$\pm$0.9 & 67.1$\pm$1.4 & 77.9$\pm$0.9 & 79.6$\pm$0.9 & 79.6$\pm$0.9 & 79.8$\pm$0.9 & 78.6$\pm$0.9 & 81.1$\pm$0.9 & \textbf{81.3$\pm$0.9} \\
    Birds            & 67.9$\pm$0.9 & 59.2$\pm$1.0 & 70.9$\pm$0.9 & 74.9$\pm$0.9 & 74.5$\pm$0.9 & 74.4$\pm$0.9 & 75.9$\pm$0.9 & \textbf{75.7$\pm$0.9} & \textbf{75.7$\pm$0.9} \\
    Textures         & 42.2$\pm$0.8 & 42.5$\pm$0.8 & 49.4$\pm$0.9 & 53.6$\pm$0.9 & 54.5$\pm$0.9 & 54.1$\pm$0.8 & 51.4$\pm$0.9 & \textbf{55.5$\pm$0.9} & 55.4$\pm$0.9 \\
    Quick Draw       & 70.5$\pm$0.9 & 79.8$\pm$0.9 & 79.6$\pm$0.9 & 79.0$\pm$0.8 & 79.3$\pm$0.9 & 78.9$\pm$0.9 & 78.9$\pm$0.9 & \textbf{80.7$\pm$0.8} & 80.2$\pm$0.9 \\
    Fungi            & 58.3$\pm$1.1 & 64.8$\pm$1.1 & 71.0$\pm$1.0 & 75.2$\pm$1.0 & 75.3$\pm$1.0 & 75.2$\pm$0.9 & 71.1$\pm$1.0 & 77.8$\pm$0.9 & \textbf{78.1$\pm$0.8} \\
    VGG Flower       & 79.9$\pm$0.7 & 65.0$\pm$1.0 & 72.7$\pm$0.0 & 79.9$\pm$0.8 & 80.3$\pm$0.8 & 80.1$\pm$0.8 & 79.8$\pm$0.8 & 81.0$\pm$0.8 & \textbf{81.1$\pm$0.8} \\ \hline
    Traffic Sign     & 55.3$\pm$0.9 & 44.6$\pm$0.9 & 52.7$\pm$0.9 & 57.9$\pm$0.9 & 57.2$\pm$1.0 & 54.1$\pm$1.0 & 57.0$\pm$0.9 & \textbf{57.4$\pm$1.0} & 56.9$\pm$1.0 \\
    MSCOCO           & 48.8$\pm$0.9 & 47.8$\pm$1.1 & 56.9$\pm$1.1 & 59.2$\pm$1.0 & 59.9$\pm$1.0 & 58.1$\pm$1.0 & 50.9$\pm$0.8 & 59.7$\pm$1.0 & \textbf{60.5$\pm$1.0} \\
    MNIST            & 80.1$\pm$0.9 & 77.1$\pm$0.9 & 75.6$\pm$0.9 & 78.7$\pm$0.9 & 80.1$\pm$0.9 & 80.3$\pm$0.9 & 72.5$\pm$0.9 & 81.4$\pm$0.8 & \textbf{81.7$\pm$0.9} \\
    CIFAR-10         & 50.3$\pm$0.9 & 35.8$\pm$0.8 & 47.3$\pm$0.9 & 54.7$\pm$0.9 & 55.8$\pm$0.9 & 52.9$\pm$1.0 & 47.3$\pm$0.8 & 55.9$\pm$0.9 & \textbf{56.0$\pm$0.9} \\
    CIFAR-100        & 53.8$\pm$0.9 & 42.9$\pm$1.0 & 54.9$\pm$1.1 & 61.8$\pm$1.0 & 63.7$\pm$1.0 & 61.0$\pm$1.1 & 60.2$\pm$1.0 & 65.2$\pm$1.0 & \textbf{65.6$\pm$1.0} \\ \hline
    Avg   Seen   & 65.0 & 64.0 & 70.6 & 73.5 & 73.5 & 73.4 & 72.2 & 74.8 & \textbf{74.9}\\
    Avg   Unseen & 57.7 & 49.6 & 57.5 & 62.5 & 63.3 & 61.3 & 57.5 & 63.9 & \textbf{64.1}\\
    Avg   All    & 62.2 & 58.5 & 65.5 & 69.2 & 69.6 & 68.7 & 66.5 & 70.6 & \textbf{70.8}\\ \hline
    Avg   Rank   & 7.5  & 8.2  & 6.8  & 4.2  & 3.5  & 4.8  & 6.2  & 1.9  & \textbf{1.5}\\ \hline 
\end{tabular}
\end{table*}

\begin{table*}
    \centering
    \setlength{\tabcolsep}{1.4mm}
    \small
    \caption{Averaged effective ranks for 17 Task-Specific Preconditioners of the DSP design $\mathbf{M^TM}$ in Varying-Way Varying-Shot setting. We average the effective ranks using 600 tasks randomly sampled from each domain. The left column denotes the name of each task-specific weight, while the right column indicates the full rank of each task-specific weight. 
    }
    \label{tab:Additional_results_for_matrix_rank1}
    \begin{tabular}{c|ccccccccccccc|c}
        \hline
         \begin{tabular}[c]{@{}c@{}}Weight's \\ Name\end{tabular} & \begin{tabular}[c]{@{}c@{}}Image \\ -Net\end{tabular} & \begin{tabular}[c]{@{}c@{}}Omni \\ -glot\end{tabular} & \begin{tabular}[c]{@{}c@{}}Airc \\ -raft\end{tabular} & Birds & \begin{tabular}[c]{@{}c@{}}Tex \\ -tures\end{tabular} & \begin{tabular}[c]{@{}c@{}}Quick \\ -Draw\end{tabular} & \begin{tabular}[c]{@{}c@{}}Fun \\ -gi\end{tabular} & \begin{tabular}[c]{@{}c@{}}VGG \\ Flower\end{tabular} & \begin{tabular}[c]{@{}c@{}}Traffic \\ Sign\end{tabular} & \begin{tabular}[c]{@{}c@{}}MS \\ -COCO\end{tabular} & \begin{tabular}[c]{@{}c@{}}MN \\ -IST\end{tabular} & \begin{tabular}[c]{@{}c@{}}CIFAR \\ -10\end{tabular} & \begin{tabular}[c]{@{}c@{}}CIFAR \\ -100\end{tabular} & \begin{tabular}[c]{@{}c@{}}Full \\ Rank\end{tabular} \\
        \hline
         layer1-0-$\alpha_1$ & 60.04 & 60.97 & 63.43 & 61.72 & 60.27 & 64.00 & 62.47 & 63.28 & 62.27 & 60.44 & 64.00 & 60.17 & 59.69 & 64\\
         layer1-0-$\alpha_2$ & 62.70 & 61.46 & 62.92 & 62.77 & 61.63 & 64.00 & 61.59 & 62.73 & 62.99 & 62.80 & 64.00 & 62.80 & 62.69 & 64\\
         layer1-1-$\alpha_1$ & 54.84 & 57.39 & 40.04 & 57.87 & 47.18 & 63.96 & 35.14 & 49.95 & 50.58 & 54.76 & 63.99 & 56.37 & 56.88 & 64\\
         layer1-1-$\alpha_2$ & 62.17 & 63.87 & 57.23 & 62.60 & 62.29 & 64.00 & 59.25 & 60.77 & 62.08 & 62.31 & 64.00 & 62.34 & 62.27 & 64\\
         layer2-0-$\alpha_1$ & 95.20 & 125.21 & 33.71 & 72.66 & 58.40 & 127.69 & 38.69 & 96.12 & 79.15 & 93.68 & 127.90 & 98.73 & 104.02 & 128\\
         layer2-0-$\alpha_2$ & 125.09 & 127.15 & 109.02 & 113.07 & 116.99 & 127.99 & 117.76 & 122.92 & 124.18 & 125.19 & 128.00 & 124.84 & 125.40 & 128\\
         layer2-1-$\alpha_1$ & 126.98 & 127.73 & 115.54 & 124.03 & 119.52 & 128.00 & 126.41 & 127.16 & 127.15 & 126.96 & 128.00 & 126.90 & 126.91 & 128\\
         layer2-1-$\alpha_2$ & 127.74 & 127.43 & 126.58 & 126.58 & 127.04 & 128.00 & 127.61 & 127.39 & 127.79 & 127.77 & 128.00 & 127.71 & 127.72 & 128\\
         layer3-0-$\alpha_1$ & 230.14 & 255.05 & 120.37 & 96.81 & 7.69 & 255.77 & 216.22 & 202.50 & 129.93 & 149.37 & 255.94 & 214.94 & 199.54 & 256\\
         layer3-0-$\alpha_2$ & 250.48 & 254.61 & 156.72 & 142.09 & 196.29 & 255.97 & 253.60 & 218.84 & 247.46 & 250.71 & 255.98 & 240.82 & 247.83 & 256\\
         layer3-1-$\alpha_1$ & 196.75 & 252.20 & 53.97 & 10.92 & 41.12 & 254.19 & 252.10 & 24.33 & 126.43 & 195.93 & 254.96 & 125.57 & 164.58 & 256\\
         layer3-1-$\alpha_2$ & 255.58 & 254.85 & 253.45 & 253.98 & 253.79 & 255.99 & 255.87 & 254.82 & 255.73 & 255.62 & 255.99 & 255.52 & 255.51 & 256\\
         layer4-0-$\alpha_1$ & 491.73 & 509.82 & 466.61 & 482.44 & 16.00 & 511.75 & 509.44 & 501.00 & 305.02 & 335.14 & 511.93 & 489.17 & 438.08 & 512\\
         layer4-0-$\alpha_2$ & 460.91 & 493.62 & 410.53 & 73.12 & 387.58 & 511.91 & 507.25 & 478.68 & 497.04 & 482.00 & 511.92 & 385.93 & 438.04 & 512\\
         layer4-1-$\alpha_1$ & 431.05 & 501.79 & 223.91 & 243.72 & 7.22 & 488.68 & 451.76 & 420.43 & 201.02 & 239.09 & 489.71 & 410.19 & 351.43 & 512\\
         layer4-1-$\alpha_2$ & 498.54 & 509.57 & 509.23 & 507.46 & 496.55 & 511.55 & 510.78 & 509.13 & 507.39 & 500.13 & 511.55 & 499.06 & 496.86 & 512\\
         pa-weight & 11.60 & 3.13 & 3.88 & 7.07 & 4.58 & 2.79 & 3.33 & 3.36 & 9.06 & 11.60 & 2.74 & 11.84 & 12.16 & 512\\
        \hline
    \end{tabular}
\end{table*}

\begin{table*}[t]
    \centering
    \setlength{\tabcolsep}{1.4mm}
    \small
    \caption{Averaged effective ranks for 17 Task-Specific Preconditioners of the DSP design $\mathbf{M^TM}$ in Varying-Way Five-Shot setting. We average the effective ranks using 600 tasks randomly sampled from each domain. The left column denotes the name of each task-specific weight, while the right column indicates the full rank of each task-specific weight.}
    \label{tab:Additional_results_for_matrix_rank2}
    \begin{tabular}{c|ccccccccccccc|c}
        \hline
         \begin{tabular}[c]{@{}c@{}}Weight's \\ Name\end{tabular} & \begin{tabular}[c]{@{}c@{}}Image \\ -Net\end{tabular} & \begin{tabular}[c]{@{}c@{}}Omni \\ -glot\end{tabular} & \begin{tabular}[c]{@{}c@{}}Airc \\ -raft\end{tabular} & Birds & \begin{tabular}[c]{@{}c@{}}Tex \\ -tures\end{tabular} & \begin{tabular}[c]{@{}c@{}}Quick \\ -Draw\end{tabular} & \begin{tabular}[c]{@{}c@{}}Fun \\ -gi\end{tabular} & \begin{tabular}[c]{@{}c@{}}VGG \\ Flower\end{tabular} & \begin{tabular}[c]{@{}c@{}}Traffic \\ Sign\end{tabular} & \begin{tabular}[c]{@{}c@{}}MS \\ -COCO\end{tabular} & \begin{tabular}[c]{@{}c@{}}MN \\ -IST\end{tabular} & \begin{tabular}[c]{@{}c@{}}CIFAR \\ -10\end{tabular} & \begin{tabular}[c]{@{}c@{}}CIFAR \\ -100\end{tabular} & \begin{tabular}[c]{@{}c@{}}Full \\ Rank\end{tabular} \\
        \hline
         layer1-0-$\alpha_1$ &60.14 &61.04 &63.43 &61.70 &60.49 &64.00 &62.46 &63.28 &62.00 &60.37 &64.00 &60.31 &59.76 &64 \\
         layer1-0-$\alpha_2$ &62.70 &61.52 &62.93 &62.79 &61.86 &64.00 &61.61 &62.73 &62.98 &62.80 &64.00 &62.80 &62.70 &64\\
         layer1-1-$\alpha_1$ &54.41 &57.46 &40.18 &57.88 &48.46 &63.96 &35.38 &49.97 &51.36 &55.34 &63.99 &55.87 &56.69 &64\\
         layer1-1-$\alpha_2$ &62.13 &63.86 &57.28 &62.61 &62.40 &64.00 &59.31 &60.78 &62.15 &62.36 &64.00 &62.31 &62.27 &64\\
         layer2-0-$\alpha_1$ &93.59 &124.77 &34.00 &73.67 &62.76 &127.68 &39.19 &96.04 &81.61 &95.96 &127.90 &96.43 &103.21 &128\\
         layer2-0-$\alpha_2$ &124.92 &127.16 &109.17 &113.52 &118.17 &127.99 &117.88 &122.93 &124.42 &125.25 &128.00 &124.63 &125.37 &128\\
         layer2-1-$\alpha_1$ &126.98 &127.74 &115.65 &124.15 &120.53 &128.00 &126.43 &127.16 &127.15 &126.91 &128.00 &126.89 &126.92 &128\\
         layer2-1-$\alpha_2$ &127.74 &127.45 &126.59 &126.63 &127.16 &128.00 &127.61 &127.39 &127.80 &127.76 &128.00 &127.71 &127.73 &128\\
         layer3-0-$\alpha_1$ &229.05 &254.92 &121.04 &99.77 &9.22 &255.76 &216.45 &202.70 &139.21 &136.38 &255.94 &211.75 &200.61 &256\\
         layer3-0-$\alpha_2$ &249.93 &254.62 &157.35 &144.87 &202.20 &255.97 &253.61 &219.02 &248.38 &249.63 &255.98 &239.71 &248.04 &256\\
         layer3-1-$\alpha_1$ &193.29 &251.15 &54.49 &11.90 &47.84 &254.12 &251.87 &24.43 &138.76 &184.25 &254.95 &127.65 &166.39 & 256\\
         layer3-1-$\alpha_2$ &255.58 &254.88 &253.48 &254.05 &254.07 &255.99 &255.87 &254.83 &255.72 &255.59 &255.99 &255.53 &255.52 &256\\
         layer4-0-$\alpha_1$ &492.50 &509.86 &466.98 &482.91 &19.81 &511.75 &509.23 &501.06 &322.40 &308.72 &511.93 &487.09 &439.86 &512\\
         layer4-0-$\alpha_2$ &457.24 &493.82 &411.01 &77.99 &398.85 &511.91 &507.03 &478.88 &496.18 &476.10 &511.92 &382.92 &440.78 &512\\
         layer4-1-$\alpha_1$ &431.12 &501.57 &225.25 &248.87 &8.84 &488.65 &451.73 &420.80 &220.69 &213.26 &489.71 &405.75 &354.55 & 512\\
         layer4-1-$\alpha_2$ &499.01 &509.62 &509.24 &507.28 &497.76 &511.55 &510.73 &509.15 &506.38 &499.75 &511.55 &499.69 &497.14 &512\\
         pa-weight &11.43 &3.21 &3.95 &7.29 &5.60 &2.79 &3.45 &3.39 &9.62 &11.74 &2.74 &11.65 &12.10 &512\\
        \hline
    \end{tabular}
\end{table*}

\begin{table*}[t]
    \centering
    \setlength{\tabcolsep}{1.4mm}
    \small
    \caption{
        Averaged effective ranks for 17 Task-Specific Preconditioners of the DSP design $\mathbf{LL^T}$ in Varying-Way Varying-Shot setting. We average the effective ranks using 600 tasks randomly sampled from each domain. The left column denotes the name of each task-specific weight, while the right column indicates the full rank of each task-specific weight.
    }
    \label{tab:Additional_results_for_matrix_rank3}
    \begin{tabular}{c|ccccccccccccc|c}
        \hline
         \begin{tabular}[c]{@{}c@{}}Weight's \\ Name\end{tabular} & \begin{tabular}[c]{@{}c@{}}Image \\ -Net\end{tabular} & \begin{tabular}[c]{@{}c@{}}Omni \\ -glot\end{tabular} & \begin{tabular}[c]{@{}c@{}}Airc \\ -raft\end{tabular} & Birds & \begin{tabular}[c]{@{}c@{}}Tex \\ -tures\end{tabular} & \begin{tabular}[c]{@{}c@{}}Quick \\ -Draw\end{tabular} & \begin{tabular}[c]{@{}c@{}}Fun \\ -gi\end{tabular} & \begin{tabular}[c]{@{}c@{}}VGG \\ Flower\end{tabular} & \begin{tabular}[c]{@{}c@{}}Traffic \\ Sign\end{tabular} & \begin{tabular}[c]{@{}c@{}}MS \\ -COCO\end{tabular} & \begin{tabular}[c]{@{}c@{}}MN \\ -IST\end{tabular} & \begin{tabular}[c]{@{}c@{}}CIFAR \\ -10\end{tabular} & \begin{tabular}[c]{@{}c@{}}CIFAR \\ -100\end{tabular} & \begin{tabular}[c]{@{}c@{}}Full \\ Rank\end{tabular} \\
        \hline
         layer1-0-$\alpha_1$ &63.97 &63.95 &63.99 &63.98 &63.99 &64.00 &63.99 &64.00 &63.99 &63.98 &64.00 &63.96 &63.97 &64 \\
         layer1-0-$\alpha_2$ &63.97 &63.96 &64.00 &63.99 &64.00 &64.00 &64.00 &64.00 &63.99 &63.98 &64.00 &63.97 &63.97 &64\\
         layer1-1-$\alpha_1$ &63.92 &63.90 &63.99 &63.96 &63.99 &64.00 &63.99 &63.99 &63.97 &63.95 &64.00 &63.91 &63.92 &64\\
         layer1-1-$\alpha_2$ &63.97 &63.99 &63.99 &63.99 &64.00 &64.00 &64.00 &64.00 &63.99 &63.98 &64.00 &63.97 &63.97 &64\\
         layer2-0-$\alpha_1$ &127.87 &127.97 &127.97 &127.92 &127.96 &128.00 &127.98 &127.98 &127.96 &127.91 &128.00 &127.84 &127.86 &128\\
         layer2-0-$\alpha_2$ &127.97 &127.99 &127.99 &127.98 &127.98 &128.00 &127.99 &127.99 &127.99 &127.98 &128.00 &127.97 &127.97 & 128\\
         layer2-1-$\alpha_1$ &127.99 &128.00 &127.99 &127.99 &127.98 &128.00 &128.00 &128.00 &128.00 &127.99 &128.00 &127.99 &127.99 &128\\
         layer2-1-$\alpha_2$ &127.99 &128.00 &128.00 &128.00 &128.00 &128.00 &128.00 &128.00 &128.00 &128.00 &128.00 &127.99 &127.99 &128\\
         layer3-0-$\alpha_1$ &255.95 &256.00 &255.96 &255.94 &255.66 &256.00 &255.97 &255.98 &255.98 &255.96 &256.00 &255.94 &255.95 &256\\
         layer3-0-$\alpha_2$ &255.98 &256.00 &255.95 &255.96 &255.90 &256.00 &255.99 &255.99 &255.99 &255.98 &256.00 &255.97 &255.98 &256\\
         layer3-1-$\alpha_1$ &255.99 &256.00 &255.93 &255.91 &255.59 &256.00 &255.98 &255.97 &255.99 &255.98 &256.00 &255.99 &255.99 &256\\
         layer3-1-$\alpha_2$ &256.00 &256.00 &255.98 &255.99 &255.99 &256.00 &256.00 &255.99 &256.00 &256.00 &256.00 &255.99 &256.00 &256\\
         layer4-0-$\alpha_1$ &511.97 &512.00 &511.91 &511.95 &510.54 &512.00 &511.95 &511.97 &511.97 &511.95 &512.00 &511.97 &511.96 &512\\
         layer4-0-$\alpha_2$ &511.96 &511.98 &511.79 &511.81 &511.66 &512.00 &511.90 &511.92 &511.95 &511.96 &512.00 &511.96 &511.97 &512\\
         layer4-1-$\alpha_1$ &511.90 &511.97 &511.54 &511.78 &507.13 &511.92 &511.58 &511.84 &511.77 &511.76 &511.93 &511.91 &511.89 &512\\
         layer4-1-$\alpha_2$ &511.97 &511.99 &511.97 &511.97 &511.81 &512.00 &511.97 &511.98 &511.98 &511.98 &512.00 &511.97 &511.97 &512\\
         pa-weight &494.46 &481.54 &494.69 &495.60 &490.04 &486.75 &484.23 &493.87 &490.07 &493.48 &486.75 &495.33 &495.25 &512\\
        \hline
    \end{tabular}
\end{table*}

\begin{table*}[t]
    \centering
    \setlength{\tabcolsep}{1.4mm}
    \small
    \caption{Averaged effective ranks for 17 Task-Specific Preconditioners of the DSP design $\mathbf{LL^T}$ in Varying-Way Five-Shot setting. We average the effective ranks using 600 tasks randomly sampled from each domain. The left column denotes the name of each task-specific weight, while the right column indicates the full rank of each task-specific weight.}
    \label{tab:Additional_results_for_matrix_rank4}
    \begin{tabular}{c|ccccccccccccc|c}
        \hline
         \begin{tabular}[c]{@{}c@{}}Weight's \\ Name\end{tabular} & \begin{tabular}[c]{@{}c@{}}Image \\ -Net\end{tabular} & \begin{tabular}[c]{@{}c@{}}Omni \\ -glot\end{tabular} & \begin{tabular}[c]{@{}c@{}}Airc \\ -raft\end{tabular} & Birds & \begin{tabular}[c]{@{}c@{}}Tex \\ -tures\end{tabular} & \begin{tabular}[c]{@{}c@{}}Quick \\ -Draw\end{tabular} & \begin{tabular}[c]{@{}c@{}}Fun \\ -gi\end{tabular} & \begin{tabular}[c]{@{}c@{}}VGG \\ Flower\end{tabular} & \begin{tabular}[c]{@{}c@{}}Traffic \\ Sign\end{tabular} & \begin{tabular}[c]{@{}c@{}}MS \\ -COCO\end{tabular} & \begin{tabular}[c]{@{}c@{}}MN \\ -IST\end{tabular} & \begin{tabular}[c]{@{}c@{}}CIFAR \\ -10\end{tabular} & \begin{tabular}[c]{@{}c@{}}CIFAR \\ -100\end{tabular} & \begin{tabular}[c]{@{}c@{}}Full \\ Rank\end{tabular} \\
        \hline
         layer1-0-$\alpha_1$ &63.97 &63.95 &63.99 &63.98 &63.99 &64.00 &63.99 &64.00 &63.99 &63.98 &64.00 &63.96 &63.97 &64\\
         layer1-0-$\alpha_2$ &63.97 &63.96 &64.00 &63.99 &64.00 &64.00 &64.00 &64.00 &63.99 &63.98 &64.00 &63.97 &63.97 &64\\
         layer1-1-$\alpha_1$ &63.92 &63.90 &63.99 &63.96 &63.99 &64.00 &63.99 &63.99 &63.97 &63.94 &64.00 &63.91 &63.92 &64\\
         layer1-1-$\alpha_2$ &63.97 &63.99 &63.99 &63.99 &64.00 &64.00 &64.00 &64.00 &63.99 &63.98 &64.00 &63.97 &63.97 &64\\
         layer2-0-$\alpha_1$ &127.87 &127.97 &127.97 &127.93 &127.96 &128.00 &127.98 &127.98 &127.95 &127.91 &128.00 &127.85 &127.86 &128\\
         layer2-0-$\alpha_2$ &127.97 &127.99 &127.99 &127.98 &127.99 &128.00 &127.99 &127.99 &127.99 &127.98 &128.00 &127.97 &127.97 &128\\
         layer2-1-$\alpha_1$ &127.99 &128.00 &127.99 &127.99 &127.99 &128.00 &128.00 &128.00 &128.00 &127.99 &128.00 &127.99 &127.99 &128\\
         layer2-1-$\alpha_2$ &127.99 &128.00 &128.00 &128.00 &128.00 &128.00 &128.00 &128.00 &128.00 &128.00 &128.00 &127.99 &127.99 &128\\
         layer3-0-$\alpha_1$ &255.95 &256.00 &255.97 &255.94 &255.78 &256.00 &255.97 &255.98 &255.98 &255.96 &256.00 &255.94 &255.95 & 256\\
         layer3-0-$\alpha_2$ &255.98 &256.00 &255.95 &255.95 &255.94 &256.00 &255.99 &255.99 &255.99 &255.98 &256.00 &255.97 &255.98 &256\\
         layer3-1-$\alpha_1$ &255.99 &256.00 &255.93 &255.90 &255.74 &256.00 &255.98 &255.97 &255.99 &255.98 &256.00 &255.99 &255.99 &256\\
         layer3-1-$\alpha_2$ &256.00 &256.00 &255.98 &255.99 &255.99 &256.00 &256.00 &256.00 &256.00 &256.00 &256.00 &256.00 &256.00 &256\\
         layer4-0-$\alpha_1$ &511.97 &512.00 &511.91 &511.95 &511.06 &512.00 &511.95 &511.97 &511.97 &511.94 &512.00 &511.97 &511.97 &512\\
         layer4-0-$\alpha_2$ &511.96 &511.98 &511.80 &511.79 &511.78 &512.00 &511.90 &511.92 &511.96 &511.96 &512.00 &511.96 &511.97 &512\\
         layer4-1-$\alpha_1$ &511.89 &511.97 &511.55 &511.76 &508.74 &511.92 &511.58 &511.84 &511.79 &511.73 &511.92 &511.91 &511.89 &512\\
         layer4-1-$\alpha_2$ &511.97 &511.99 &511.97 &511.97 &511.87 &512.00 &511.97 &511.98 &511.98 &511.97 &512.00 &511.97 &511.97 &512\\
         pa-weight &494.37 &481.54 &494.72 &495.46 &490.94 &486.75 &484.26 &493.87 &490.64 &494.00 &486.75 &495.13 &495.09 &512\\
        \hline
    \end{tabular}
\end{table*}

\begin{figure*}[t]
    \centering
    \begin{subfigure}[t]{0.7\textwidth}
        \includegraphics[width=\textwidth]{Figures/TSP_train.png}
        \subcaption{}
        \label{subfig:TSP_TSA_train}
    \end{subfigure}
    \begin{subfigure}[t]{0.7\textwidth}
        \includegraphics[width=\textwidth]{Figures/TSP_test.png}
        \subcaption{}
        \label{subfig:TSP_TSA_test}
    \end{subfigure}
    \begin{subfigure}[t]{0.7\textwidth}
        \includegraphics[width=\textwidth]{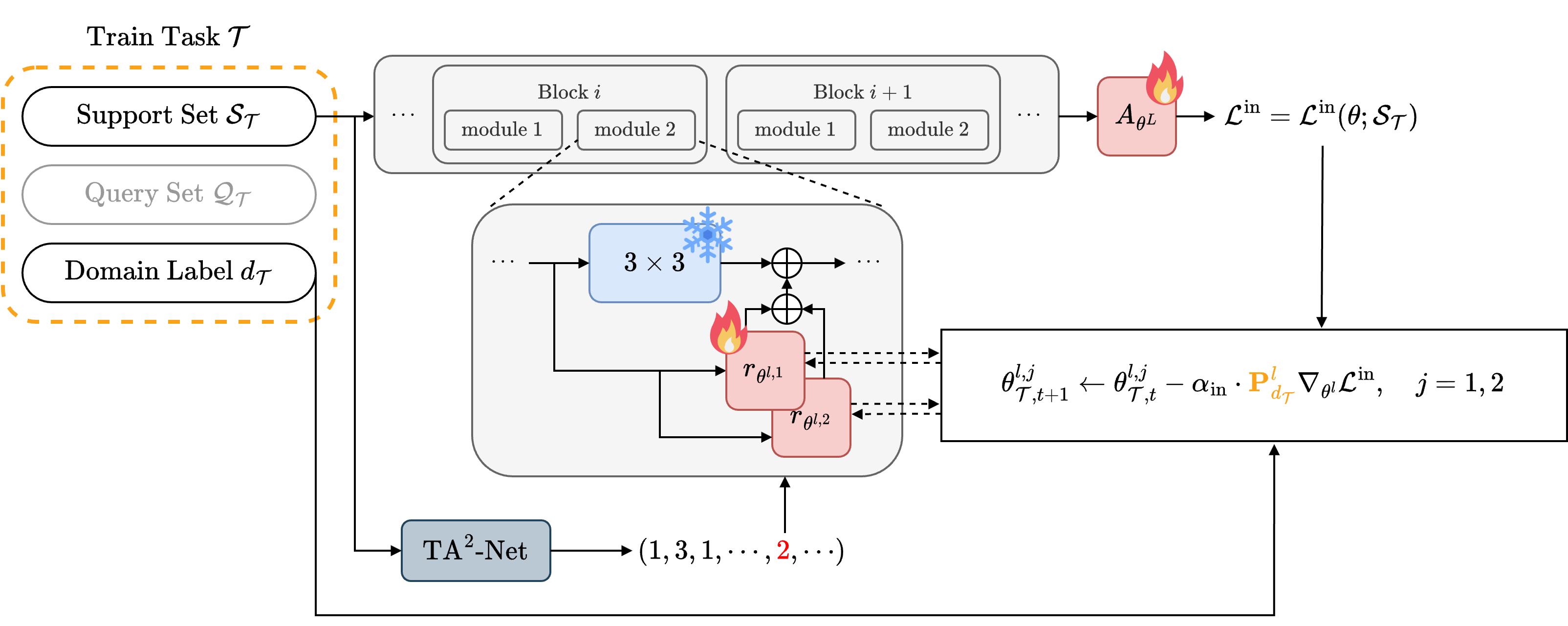}
        \subcaption{}
        \label{subfig:TSP_TA2Net_train}
    \end{subfigure}
    \begin{subfigure}[t]{0.7\textwidth}
        \includegraphics[width=\textwidth]{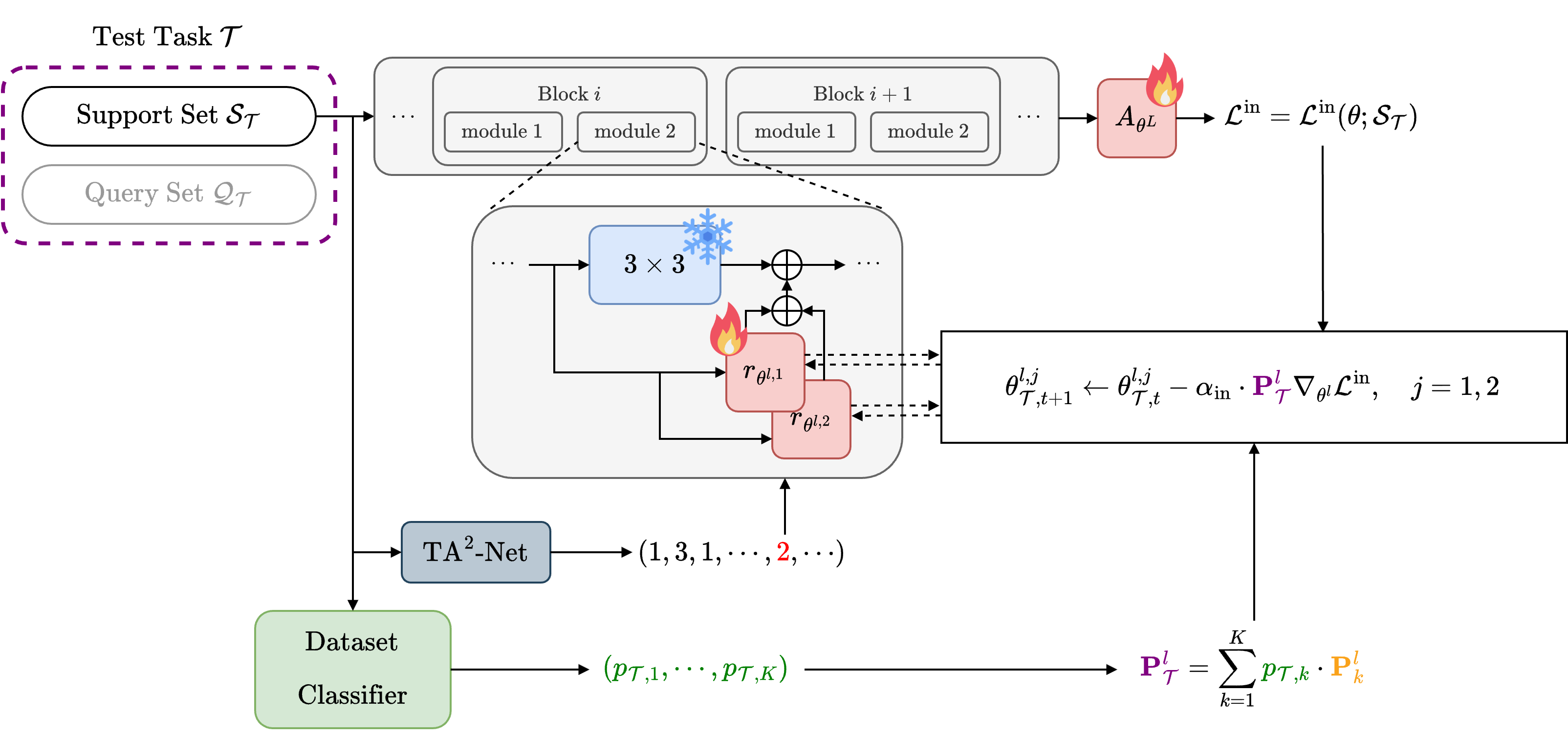}
        \subcaption{}
        \label{subfig:TSP_TA2Net_test}
    \end{subfigure}
    \caption{TSP applied on TSA and TA$^2$-Net. (a) PGD with Domain-Specific Preconditioner~(DSP) applied on TSA during meta-training. (b) PGD with Task-Specific Preconditioner applied on TSA during meta-testing. (c) PGD with Domain-Specific Preconditioner~(DSP) applied on TA$^2$-Net during meta-training. (d) PGD with Task-Specific Preconditioner applied on TA$^2$-Net during meta-testing.}
    \label{fig:TSP_TA2Net}
\end{figure*}
\end{document}